\def\BibTeX{{\rm B\kern-.05em{\sc i\kern-.025em b}\kern-.08em
    T\kern-.1667em\lower.7ex\hbox{E}\kern-.125emX}}
\DeclareSIUnit\px{px}
\newtheorem{defn}{Definition}
\newtheorem{thm}{Theorem}
\newtheorem{corol}{Corollary}
\newtheorem{lemma}{Lemma}
\newtheorem*{example*}{Example}
\newtheorem*{prop*}{Proposition}
\newtheorem*{thm*}{Theorem}
\newtheorem*{prv*}{Proof}
\newtheorem*{rmq*}{Remark}
\newcommand{\ie}{\textit{i.e. \xspace}}
\newcommand{\eg}{\textit{e.g. \xspace}}
\newcommand\norm[1]{\left\lVert#1\right\rVert}
\newcommand{\diag}      {\ensuremath{\mathrm{diag}}}
\newcommand{\circulant} {\ensuremath{\mathrm{circ}}}
\newcommand{\toeplitz} {\ensuremath{\mathrm{toep}}}
\newcommand{\bcirc}     {\ensuremath{\mathrm{blkcirc}}}
\newcommand{\cin}{{c_{\mathrm{in}}}}
\newcommand{\cout}{{c_{\mathrm{out}}}}
\def\ddefloop#1{\ifx\ddefloop#1\else\ddef{#1}\expandafter\ddefloop\fi}
\newcommand{\ci}{\ensuremath \mathbf{i}}
\renewcommand{\vector}{\ensuremath{\mathrm{vec}}\xspace}
\def\ddef#1{\expandafter\def\csname #1bb\endcsname{\ensuremath{\mathbb{#1}}}}
\def\ddef#1{\expandafter\def\csname #1set\endcsname{\ensuremath{\mathcal{#1}}}}
\def\ddef#1{\expandafter\def\csname #1mat\endcsname{\ensuremath{\mathbf{#1}}}}
\def\ddef#1{\expandafter\def\csname #1matsf\endcsname{\ensuremath{\mathsf{#1}}}}
\def\ddef#1{\expandafter\def\csname #1vec\endcsname{\ensuremath{\mathbf{#1}}}}
\begin{document}

%\twocolumn[
%\onecolumn
\title{Spectral Norm of Convolutional Layers\\with Circular and Zero Paddings}

\author{Blaise Delattre, Quentin Barthélemy and Alexandre Allauzen
\thanks{Blaise Delattre and Quentin Barthélemy are with the company FOXSTREAM, Vaulx-en-Velin, France. (e-mail: blaise.delattre@dauphine.eu).}% <-this % stops a space
\thanks{Blaise Delattre and Alexandre Allauzen are with the Miles Team, LAMSADE, Universit\'e Paris-Dauphine, PSL University, Paris, France.}% <-this % stops a space
\thanks{Alexandre Allauzen is with ESPCI PSL, Paris, France.}
}

\maketitle

\begin{abstract}
This paper leverages the use of \emph{Gram iteration} an efficient, deterministic, and differentiable method for computing spectral norm with an upper bound guarantee.
Designed for circular convolutional layers, we generalize the use of the Gram iteration to zero padding convolutional layers and prove its quadratic convergence. We also provide theorems for bridging the gap between circular and zero padding convolution's spectral norm.
We design a \emph{spectral rescaling} that can be used as a competitive $1$-Lipschitz layer that enhances network robustness. Demonstrated through experiments, our method outperforms state-of-the-art techniques in precision, computational cost, and scalability.
\end{abstract}

\begin{IEEEkeywords}
Convolutional layers, spectral norm, zeros padding, circular padding, adversarial robustness.
\end{IEEEkeywords}

\section{Introduction}
\IEEEPARstart{C}{onvolutional} neural networks (CNNs) \cite{lecun_gradient_1998} have become pivotal in computer vision and achieve state-of-the-art performance \cite{krizhevsky_imagenet_2012, tan_efficientnetv2_2021}.
The essence of CNNs lies in a cascade of convolutional layers, crucial for extracting hierarchical features.
Convolutional architecture gradually reduces the spatial dimensions of input while increasing the semantic dimension and maintaining the separability of classes \cite{mallat_understanding_2016}.
CNNs are both sparse and structured: indeed the weight-sharing topology of such networks encodes inductive biases, like locality and translation-equivariance. From these biases arises a very powerful architecture for computer vision, in terms of speed, parameter efficiency, and performance.  
Beyond visual tasks, convolutional layers contribute to applications like natural language and speech processing \cite{li_survey_2022, conneau2017very, gu_efficiently_2022,Baevski20Wav2vec} (mainly with one-dimensional convolution), and to graph data \cite{zhang_graph_2019} for applications in biology for instance.

Despite their long-standing use, CNNs are still the focus of lively research on many theoretical and empirical aspects. 
For instance, the relationship between the dataset structure and CNN architectures is explored using spectral theory \cite{pinson_linear_2023}. Examining CNNs from a different perspective, some approaches investigate their properties through the Hessian rank of their matrix representation \cite{singh_hessian_2023}. Padding in convolutional layers is the subject of various studies, analyzing its impact on frequency \cite{tang_defects_2023}, adversarial robustness \cite{gavrikov_interplay_2023}, and encoded position information \cite{islam_how_2019, semih_kayhan_translation_2020}. The effects of stride on convolutional and pooling layers, particularly addressing aliasing issues, are considered and corrected within the framework of sampling theory \cite{zhang_making_2019}. All these contributions showed that CNN layers and their design can still be improved, thanks to theoretical exploration.  

The spectral norm of a matrix plays an important role in understanding the generalization and robustness properties of neural networks.
The work of \cite{bartlett_spectrally-normalized_2017} shows that a bound on the spectral norm of individual layers plays an important role in the generalization bound of neural networks.  Regularizing it enhances generalization and stabilizes training \cite{miyato2018spectral}.
For linear operators, spectral norm coincides with the Lipschitz constant for the $\ell_2$-norm which is a key quantity for adversarial robustness \cite{scaman2018lipschitz};  it guards against perturbation to the input \cite{cisse_parseval_2017, tsuzuku2018lipschitz}, ensuring the robustness and adaptability of CNNs in various deep learning applications. 
%Therefore, the spectral norm is an important quantity, and its efficient estimation is challenging, especially if we want to leverage the peculiar topology of convolution layers or different padding strategies.  
However, efficiently estimating spectral norms, especially for convolutional layers, remains challenging. Our work looks to address this by leveraging the \emph{Gram iteration} approach \cite{delattre_efficient_2023} to bound convolutional layer spectral norms.
In this article, we generalize \emph{Gram iteration}  for any matrix norm, we prove \emph{quadratic convergence} to the true spectral norm and we detail the upper bounding property, as detailed in Section~\ref{ssec:spectral_norm_gram}.
We extend the spectral norm computation for convolutional layers with zero padding, see Section~\ref{section:spectral_norm_toep_and_circ}. The resulting estimation serves as a guaranteed upper bound on the spectral norm, crucial for robustness applications.
We explore the computational efficiency of calculating spectral norms for large spatial-sized images and examine the relationship between bounds derived for lower and higher input spatial sizes, as well as the spectral norm bound gap between zeros-padded and circular padding convolutions. Theoretical results are presented in Section~\ref{section:approx}.
We also derived a new $1$-Lipschitz layer, using a \emph{Spectral Rescaling} (SR) to produce Lipschitz neural networks robust to adversarial attacks, see Section~\ref{section:spectral_rescaling}.

\section{Related Work}
Popular methods to compute spectral norm for matrices are \emph{Power iteration} with linear convergence and \emph{Singular values decomposition} (SVD), \cite{golub2000eigenvalue}.
Regarding usage in deep learning, the backward pass of SVD is ill-conditioned \cite{wang_robust_2022}, which makes the method difficult to use for regularization during training. 
Recently, \cite{delattre_efficient_2023} proposed a fast iterative method on GPU called \emph{Gram iteration} which is precise and differentiable.
However, applying those methods directly to the convolutional layer is very costly as the size of the matrix representing the convolutional operator is very large. 
To address this issue, \cite{sedghi2019singular} represents the convolutional layer using a doubly-block circulant matrix. Leveraging the \emph{Fourier transform}, this matrix can be block-diagonalized, and SVD is applied to each block.
Yet, SVD has a large computational cost on GPU making this method quite costly. As computing the singular values is not trivial for zeros-padded convolutions or requires a large computational cost, \cite{wang2020orthogonal} proposes a regularization of singular values by enforcing orthogonalization cheaply. Such orthogonalization has been further studied in \cite{el_mehdi_existence_2022}.

Prior works like \cite{miyato2018spectral, farnia2019generalizable} have adapted the power iteration method to efficiently estimate spectral norms for convolutional layers. However, these estimates do not provide guaranteed upper bounds on the true spectral norm value. Having a certified upper bound is critical for robustness analysis and certification against adversarial perturbations \cite[Proposition 1]{tsuzuku2018lipschitz}. Without a true upper bound, we cannot reliably calculate the worst-case variation of a layer to derive robustness certificates. More precisely, an upper bound on the spectral norm enables formally certifying model robustness, while loose estimates can fail to fully account for the layer's Lipschitz properties.
Recently some approaches have provided methods to derive a scalable upper bound on the spectral norm for convolutional layers, \cite{araujo2021lipschitz, yi2020asymptotic} inspect the spectral density of the matrix representing the convolutional operator to derive bounds and algorithms to compute its spectral norm. \cite{singla2021fantastic} derives an upper bound independent of the input size of the image. However, those methods do not converge to the true spectral norm. The work of \cite{delattre_efficient_2023} uses \emph{Gram iteration} for circular padded convolutional layers to produce an upper bound estimate that converges to the spectral norm. 

Bounding the spectral norm of a layer is a central tool to control the Lipschitz constant of a neural network.
A function $f: \mathbb{R}^d \to \mathbb{R}^m$ is $L$-Lipschitz for the $\ell_2$ norm iff $\norm{f(x) - f(y)}_2 \leq L \norm{x - y}_2$ for all $x, y \in \mathbb{R}^{d}$. The ratio of the Lipschitz constant of the neural network and the input's margins can be used to certify robustness \cite{tsuzuku2018lipschitz} and is called the certified radius.
Spectral norm computations \cite{miyato2018spectral, farnia2019generalizable} can be used to perform \emph{spectral normalization} (SN) on the network layers, it produces robust $1$-Lipschitz layers that result in robust neural network architectures. Indeed, composing $1$-Lipschitz layers gives a $1$-Lipschitz neural network.

A downside of SN is that it can shrink gradient magnitudes, harming training stability and limiting model scalability and performance. 
To mitigate this, techniques like orthogonalization have emerged to construct gradient-preserving layers \cite{li2019preventing, trockman2021orthogonalizing, skew2021sahil}. However, full orthogonalization has substantial computational costs for convolutional layers that limit scaling.
The recently proposed \emph{almost-orthogonal Lipschitz} (AOL) approach \cite{avidan_almost-orthogonal_2022} offers a middle ground between SN and full orthogonalization. Through efficient channel-wise rescaling, AOL layers provide improved gradient propagation compared to spectral normalization, while having minimal extra computational overhead. 
Recent work has explored more advanced composite residual layer architectures, such as \emph{Convex Potential Layer} (CPL) \cite{meunier_dynamical_2022} which draws inspiration from dynamical systems, and \emph{SDP-based Lipschitz Layer} (SLL) \cite{araujo_unified_2022} which extends its frameworks with techniques for Lipschitz constant estimation \cite{fazlyab2019efficient}. These proposed residual layers help gradient propagation and incorporate SN or AOL to regulate layer stability. 
However, a limitation of the use of AOL is that it fails to tightly bound the spectral norm of layer weights, leading to over-constrained rescaled layers. This over-regularization can result in decreased robustness guarantees.

\section{Background on convolutional layers}
\label{section:circulant_and_toeplitz_matrices}

\subsection{Notations}
We define the operator norm of matrix $A$ as 
$\|A\|_{p} = \max_{x \neq 0} \frac{\|Ax\|_{p}}{\|x\|_{p}}$.
For $p=2$, the spectral norm $\norm{A}_2$ equals the largest singular value $\sigma_1(A)$, and for $p=\infty$, $\|A\|_{\infty} = \max_i \sum_j |a_{ij}|$. Frobenius norm of $A$ is denoted $\norm{A}_F$.
We say that a matrix norm $\norm{.}$ is consistent, if for two matrices $A, B$, $\norm{AB} \leq \norm{A} \norm{B}$.

$A^*$ corresponds to the conjugate transpose matrix of $A$.
We define the Kronecker product of two matrices $A$ and $B$ by $A \otimes B$ and the convolution product by $A \star B$.

We denote $\circulant(\cdot)$ (respectively $\toeplitz(\cdot)) $ the operator generating a circulant (respectively Toeplitz) matrix from a vector.
The discrete Fourier transform (DFT) matrix $U \in \Cbb^{n \times n}$ is defined such that, for $1 \leq u,v \leq n$,   $U_{u,v} = e^{-\frac{2\pi\ci uv}{n}}$ and its inverse is defined as $U^{-1} = \frac1n \ U^*$.
We designate maximum non-trivial zero padding for convolutions when input $X \in \Rbb^{\cin \times n \times n}$ is padded towards $\Rbb^{\cin \times n  + k-1 \times n + k-1}$ with $k$ the size of the convolutional kernel.

\subsection{Toeplitz convolutional layers}

Convolutional layers are an essential building block of convolutional neural networks.
Let the input be $X \in \Rbb^{\cin \times n \times n}$, and $K \in \Rbb^{\cout \times \cin \times k \times k}$ a convolution filter. 
%In the following, the kernel will be always considered as padded towards input size, \ie $K \in \Rbb^{n \times n}$.
We suppose that $Y \in \Rbb^{\cout \times n \times n}$ the output of the convolutional layer $K \star X = Y$ 
has the same spatial size as the input $X$ by selecting the correct padding with zeros. We suppose that the stride equals to $1$.
For $1 \leq j \leq \cout$, $1 \leq u, v \leq n$, $Y$ expresses as:
\begin{equation}
\label{eq:convolution_product}
   Y_{j,u,v} = \sum_{i=1}^\cin \sum_{k_1=1}^n \sum_{k_2=1}^n X_{i, u + k_1,v + k_2} K_{j, i, k_1,k_2} \ ,
\end{equation}

where $K_{j,i, k_1,k_2} = 0$ if $k_1, k_2 \notin \{ 1, \dots,k\}$.
 
The convolutional layer can be expressed as a matrix-vector product, where the structure of matrix $T \in \Rbb^{\cout n^2 \times \cin n^2}$ takes into account the weight sharing of convolution \cite{jain1989fundamentals}:
\begin{equation}
  \label{eq:convolution_product_toep}
  \vector(Y) = \vector(K \star X) = T \vector(X) \ ,
\end{equation}
where $T \in \Rbb^{\cout n^2 \times \cin n^2}$ is a $\cout \times  \cin $ block matrix, where each $T_{j,i, ::}$ is a $n^2 \times n^2$ banded doubly Toeplitz matrix formed with kernel $K_{j,i}$. See representation in Appendix~\ref{section:appendix_toeplitz_matrix}.
As described in Section~2.2 of \cite{yi2020asymptotic}, $T^\Uparrow \in \mathbb{R}^{\cout n^2 \times \cin n^2}$ is a multi-level block Toeplitz matrix where entries are $\cout \times \cin$ matrices. To compute the Lipschitz constant of the convolutional layer it involves computing the maximum singular value of $T$: $\sigma_1(T)$.

\subsection{Circular convolutional layers}

When we pad the input $X$ with circular padding \ie we wrap the image resulting in a circular input, which allows decomposition in the Fourier basis to ease spectrum analysis. The convolutional layer output can be expressed as:
\begin{equation}
  \label{eq:convolution_product_circ}
  \vector(Y) = C \vector(X) \ .
\end{equation}
In the same manner, as for zero padding with matrix $T$, $C$ can also be represented as a $\cout \times  \cin $ block matrix of a doubly circulant matrix.
$C_{j,i, ::}$ is fully determined by the kernel $K_{j, i}$. 
See representation in Appendix~\ref{section:appendix_circulant_matrix}.
In the following, we denote $C_{j,i, ::} = \bcirc(K_{j, i, 1, :}, \dots, K_{j, i, n, :})$. 

Now, based on the doubly-block circulant structure of convolution with circular padding, we present some useful properties.
Operating on a vectorized 2D input $X$, the 2D discrete Fourier transform is expressed as $\vector(U^\top X U) = (U \otimes U) \vector(X)$. We denote $F = U \otimes U$ in the following of the paper.

\begin{thm}[Section 5.5 of~\cite{jain1989fundamentals}]
Let $K \in \Rbb^{n \times n}$ a convolutional kernel and $C \in \Rbb^{n^2 \times n^2}$ be the doubly-block circulant matrix such that $C = \bcirc\left(K_1, \dots, K_n\right)$, then, $C$ can be diagonalized as follows:
\begin{equation}
  C = F \diag(\lambda) F^{-1}
\end{equation}
where $\lambda = F \vector(K)$ are the eigenvalues of $C$.
\end{thm}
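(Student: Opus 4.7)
My strategy would be to reduce the doubly-block circulant case to the classical one-dimensional circulant diagonalization by exploiting the Kronecker factorization $F = U \otimes U$. As a building block, I would recall that any ordinary circulant matrix $\circulant(v)\in\Cbb^{n\times n}$ satisfies $\circulant(v) = U\diag(Uv)U^{-1}$, so that $U$ simultaneously diagonalizes every circulant matrix with eigenvalues given by the DFT of the generating vector.

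First, using the $n\times n$ cyclic shift matrix $S$, I would rewrite $C = \bcirc(K_1,\dots,K_n) = \sum_{i=1}^n S^{i-1} \otimes \circulant(K_i)$. Conjugating by $U \otimes I_n$ then applies the one-dimensional result at the outer block level: since each $U S^{i-1} U^{-1}$ is diagonal, $C$ is transformed into a block-diagonal matrix $\bdiag(B_1,\dots,B_n)$ whose $k$-th block $B_k = \sum_{i=1}^n \omega^{(i-1)(k-1)} \circulant(K_i)$ is itself circulant, with generating vector equal to the DFT of $K$ along its first axis. Second, I would conjugate by $I_n \otimes U$ to diagonalize each inner circulant block $B_k$, producing a diagonal matrix whose collected entries are precisely $F\vector(K)$. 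Composing the two conjugations, and using $(U\otimes I_n)(I_n\otimes U) = U\otimes U = F$, yields $C = F\diag(\lambda)F^{-1}$ with $\lambda = F\vector(K)$, as claimed.

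As a more conceptual sanity check, I would also verify the identity through the convolution theorem: by construction $C\vector(X) = \vector(K \star X)$ for every input $X$, and the 2D DFT turns circular convolution into pointwise multiplication, giving $F C\vector(X) = \diag(F\vector(K))\,F\vector(X)$ for all $X$, hence $FC = \diag(\lambda)F$, recovering the same diagonalization.

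The main obstacle is bookkeeping around conventions: whether the generating vector indexes the first row or first column of $\circulant(\cdot)$ and $\bcirc(\cdot)$, and the corresponding sign in the DFT matrix $U$ fixed in the paper, which together determine whether $F$ sits on the left or on the right of $\diag(\lambda)$ in the final identity. Once the orientation of $\bcirc$ is fixed consistently with the convolution defined in \eqref{eq:convolution_product}, the Kronecker algebra becomes routine and the eigenvalue formula $\lambda = F\vector(K)$ follows without ambiguity.
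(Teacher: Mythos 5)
The paper does not prove this statement; it is quoted directly from Section~5.5 of the cited textbook, so there is no in-paper argument to compare against. Your Kronecker-factorization route --- writing $\bcirc(K_1,\dots,K_n)=\sum_i S^{i-1}\otimes\circulant(K_i)$, conjugating by $U\otimes I_n$ to block-diagonalize via the shift matrix, then by $I_n\otimes U$ to diagonalize the resulting circulant blocks, and composing to $F=U\otimes U$ --- is the standard and correct proof of this result, and your convolution-theorem check is a legitimate independent verification provided the 2D discrete convolution theorem is itself established from the definition of $U$. The only point you leave open is the one you explicitly flag: with the paper's conventions ($U_{u,v}=e^{-2\pi\ci uv/n}$ and $\circulant$/$\bcirc$ generated along the first row, as displayed in the appendix), one must check that the diagonalization lands as $C=F\diag(\lambda)F^{-1}$ rather than $F^{-1}\diag(\lambda)F$; this is routine but should be carried out rather than deferred, since the two forms differ by a conjugation that matters once $C$ is no longer normal in the multi-channel generalization.
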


In the context of deep learning, convolutions are applied with multiple channels: $\cout$ convolutions are applied on input with $\cin$ channels.
As also studied by \cite{sedghi2019singular} and \cite{yi2020asymptotic},
this type of block doubly-block circulant matrix can be \emph{block} diagonalized as follows.

\begin{thm}[Corollary A.1.1. of~\cite{trockman2021orthogonalizing}]
\label{theorem:block_diagonalization}
Let $P_\text{out} \in \Rbb^{\cout n^2 \times \cout n^2}$ and $P_\text{in} \in \Rbb^{\cin n^2 \times \cin n^2}$ be permutation matrices and let $C \in \Rbb^{\cout n ^2 \times \cin n^2}$ be the matrix equivalent of the multi-channel circular convolution defined by filter $K \in \Rbb^{\cout \times \cin \times n \times n}$, then $C$ can be block diagonalized as follows:
\begin{equation}
  C = (I_{\cout} \otimes F) P_\text{out} D P_\text{in}^\top (I_{\cin} \otimes F^{-1})
\end{equation}
where $D$ is a block diagonal matrix with $n^2$ blocks of size $\cout \times \cin$ and where 
\begin{equation}
  D = P_\text{out}^\top 
   \begin{pmatrix}
     F \vector(K_{1, 1}) & \cdots & F \vector(K_{1, \cin}) \\
     \vdots & & \vdots \\
     F \vector(K_{\cout, 1}) & \cdots & F \vector(K_{\cout, \cin})
   \end{pmatrix} 
   P_\text{in} \ .
\end{equation}
\end{thm}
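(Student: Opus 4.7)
The plan is to reduce the multi-channel case to the single-channel diagonalization already recalled just above, and then apply a perfect-shuffle permutation to reorganize a ``block of diagonals'' into a ``diagonal of blocks''.

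First, I would view $C$ as a $\cout \times \cin$ block matrix whose $(j,i)$ block is the doubly-block circulant matrix $C_{j,i}$ associated with the kernel slice $K_{j,i}$. By the single-channel theorem, each such block factors as $C_{j,i} = F \diag(\lambda_{j,i}) F^{-1}$ with $\lambda_{j,i} = F \vector(K_{j,i})$. Using the fact that $(I_c \otimes F)$ acts block-wise on block matrices, I can pull these Fourier factors outside the block structure to obtain
\[ C = (I_{\cout} \otimes F) \, \tilde{D} \, (I_{\cin} \otimes F^{-1}), \]
where $\tilde{D}$ is the $\cout \times \cin$ block matrix whose $(j,i)$ block is the $n^2 \times n^2$ diagonal matrix $\diag(\lambda_{j,i})$. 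This is a ``block matrix of diagonal matrices''.

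Second, I would show that $\tilde{D}$ is a permutation-conjugate of a genuinely block-diagonal matrix. For each spatial index $\ell \in \{1,\dots,n^2\}$ define $D_\ell \in \Rbb^{\cout \times \cin}$ to be the matrix whose $(j,i)$ entry is the $\ell$-th component of $\lambda_{j,i}$, and set $D = \bdiag(D_1,\dots,D_{n^2})$. The claim is that there exist permutation matrices $P_\text{out}$ and $P_\text{in}$, namely the perfect shuffles swapping the channel index and the spatial index, such that $\tilde{D} = P_\text{out}\, D\, P_\text{in}^\top$. Concretely, $P_\text{out}$ sends the row indexed by $(j,\ell)$ in the ordering ``channel-outer, space-inner'' to the row indexed by $(\ell,j)$ in the ordering ``space-outer, channel-inner'', and similarly for $P_\text{in}$; verifying the equality reduces to an index chase on a single scalar entry.

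Combining the two steps yields the stated factorization $C = (I_{\cout} \otimes F) P_\text{out} D P_\text{in}^\top (I_{\cin} \otimes F^{-1})$. The explicit formula for $D$ in the theorem is then just the inverse reshuffle $D = P_\text{out}^\top \tilde{D} P_\text{in}$, where the displayed ``matrix of Fourier transforms'' is read block-wise with each $F\vector(K_{j,i})$ occupying its $(j,i)$ slot as a diagonal. The main obstacle I anticipate is pinning down precisely the perfect-shuffle permutations and checking that a single choice of $P_\text{out}$ and $P_\text{in}$ simultaneously produces the block-diagonal structure of $D$ on the left-hand side and the compact assembly formula on the right-hand side; once this bookkeeping is fixed, everything else is elementary Kronecker and block-matrix algebra.
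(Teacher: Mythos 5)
This theorem is imported verbatim as Corollary~A.1.1 of \cite{trockman2021orthogonalizing} and the paper supplies no proof of its own, so there is no internal argument to compare against. Your proof is correct and is essentially the standard one from the cited source: apply the single-channel diagonalization $C_{j,i}=F\diag(F\vector(K_{j,i}))F^{-1}$ block-wise to get $C=(I_{\cout}\otimes F)\,\tilde{D}\,(I_{\cin}\otimes F^{-1})$ with $\tilde{D}$ a block matrix of diagonals, then conjugate by the perfect-shuffle (commutation) permutations to turn it into a block-diagonal matrix of $n^2$ blocks of size $\cout\times\cin$; your observation that each $F\vector(K_{j,i})$ in the displayed formula for $D$ must be read as the corresponding $n^2\times n^2$ diagonal matrix is also the right way to make the dimensions of that expression consistent.
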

Matrices $P_\text{out}$ and $P_\text{in}$ reshaping matrix $D$ into a block diagonal matrix correspond to an alternative vectorization of layer input \cite{sedghi2019singular,yi2020asymptotic} while keeping singular values identical \cite{henderson1981TheVM}.

Based on this result and the properties of block diagonal matrices, it is easy to compute the largest singular value $\sigma_1$ of $C$ from the block diagonal matrix $D$.
Let $(D_i)_{i = 1, \dots, n^2}$ be the diagonal blocks of the matrix $D$, then:
\begin{equation}
  \label{equation:lip_conv_circu_expression}
  \sigma_1(C) = \sigma_1(D) = \max_i \sigma_1(D_i) \ .
\end{equation}
Computing the spectral norm of $C$ implies computing the $n^2$ spectral norms of $\cout \times \cin$ matrices.

\subsection{Convolutional layers for different kernel sizes}

%The original multichannel convolutional filter is denoted by $K \in \Rbb^{\cout \times \cin \times k \times k}$, 
We distinguish between the multichannel filter padded towards input size $n$  denoted as $K^\Uparrow \in \Rbb^{\cout \times \cin \times n \times n}$, and filter padded towards lower input spatial size $n_0$, with $k << n_0 \leq n$, $K^\uparrow \in \Rbb^{\cout \times \cin \times n_0 \times n_0}$.
Considering the block diagonal matrix $D_i^\Uparrow$ built from filter $K^\Uparrow$, we re-index $(D_i^\Uparrow)_{i = 1, \dots, n^2}$ into $(D_{u,v}^\Uparrow)_{1 \leq u,v \leq n}$, such that $D_{uv}^\Uparrow = D_{u,v}^\Uparrow$, we note $C^\Uparrow$ the associated matrix representing the circular convolutional layer. In the same manner, we define $(D_{u,v}^\uparrow)_{1 \leq u,v \leq n}$ built from filter $K^\uparrow$ and associated $C^\uparrow$.
With $U^\Uparrow \in \mathbb{C}^{n \times n}$ the DFT matrix 
%, \cite{sedghi2019singular} 
gives an expression of matrix $D_{u,v}^\Uparrow$:
\begin{align*}
  D_{u,v}^\Uparrow &= \sum_{k_1=0}^{n-1} \sum_{k_2=0}^{n-1} U_{k_1, u}^\Uparrow \ K^\Uparrow_{:,:,k_1, k_2} \ U_{k_2, v}^\Uparrow  \\
  %&= \sum_{k_1=0}^{n-1} \sum_{k_2=0}^{n-1} e^{-\frac{2\pi\ci k_1 u}{n}} \ e^{-\frac{2\pi\ci k_2 v}{n}} \ K^\Uparrow_{:,:,k_1, k_2} \\
   &= \sum_{k_1=0}^{k-1} \sum_{k_2=0}^{k-1} e^{-\frac{2\pi\ci k_1 u}{n}} \ e^{-\frac{2\pi\ci k_2 v}{n}} \ K_{:,:,k_1, k_2} \ ,
\end{align*}
as $K^\Uparrow$ is zero-padded.
Same expression can be obtained for $D_{u,v}^\uparrow = \sum_{k_1=0}^{k-1} \sum_{k_2=0}^{k-1} e^{-\frac{2\pi\ci k_1 u}{n_0}} \ e^{-\frac{2\pi\ci k_2 v}{n_0}} \ K_{:,:,k_1, k_2} 
$.

Similarly, we define,  $T^\Uparrow$ and $T^\uparrow$ respectively constructed from  filters $K^\Uparrow$ and $K^\uparrow$.

%%%%%%%%%%%%%%%%%%%%%%%%%%%%%%%%%%%%%%%%%%%%%%%%%%%%%%

\section{Gram iteration}
\label{ssec:spectral_norm_gram}

In this section, we revisit \emph{Gram iteration} introduced in \cite{delattre_efficient_2023}, it applies matrix squaring and Frobenius norm to produce a sequence of iterates that converge and upper bound the spectral norm.
We leverage Gelfand's formula \cite{horn_matrix_2012} to generalize the result to any matrix norm and prove quadratic convergence to the spectral norm.
All proofs of the article are deferred in the Appendix.

\begin{thm}
\label{thm:main_result}
Let $\norm{\cdot}$ be a matrix norm.
Let $W \in \mathbb{C}^{p \times q}$ and define the recurrent sequence  $W^{(t+1)} = {W^{(t)}}^* W^{(t)}$, with $W^{(1)} = W$. 
Let $(\norm{W^{(t)}}^{2^{1-t}})$ be another sequence based on $W^{(t)}$.
Then, $\forall t \geq 1$, we have the following results:
\begin{itemize}[parsep=0pt,itemsep=0pt,topsep=0pt,leftmargin=10pt]
    \item The sequence converges to the spectral norm:
    \begin{equation*}
        \norm{W^{(t)}}^{2^{1 -t}} \underset{t \to +\infty}{\longrightarrow} \sigma_1(W) \ ,
    \end{equation*}
    and this convergence is R-quadratic.
    \item If $\norm{\cdot}$ is consistent, the sequence is an upper bound on the spectral norm:
    \begin{equation*}
        \sigma_1(W) \leq  \norm{W^{(t)}}^{2^{1 -t}} \ .
    \end{equation*}
\end{itemize}
\end{thm}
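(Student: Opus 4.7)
The plan is to proceed in three stages: compute an explicit closed form for the iterates, handle the upper-bound property, and then establish the convergence together with its quadratic rate.

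First I would unroll the recursion via the SVD of $W$. Writing $W = U\Sigma V^*$, a straightforward induction shows that for $t \geq 2$,
\begin{equation*}
W^{(t)} = V \, \mathrm{diag}\bigl(\sigma_1(W)^{2^{t-1}}, \ldots, \sigma_q(W)^{2^{t-1}}\bigr) \, V^*,
\end{equation*}
so each $W^{(t)}$ ($t \geq 2$) is Hermitian positive semidefinite, and $\sigma_1(W^{(t)}) = \sigma_1(W)^{2^{t-1}}$. This identity is the backbone of both remaining parts.

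For the upper bound, I would invoke the standard inequality $\rho(A) \leq \|A\|$, valid for any consistent matrix norm. Since $W^{(t)}$ is PSD for $t \geq 2$, its spectral radius and spectral norm coincide, so
\begin{equation*}
\sigma_1(W)^{2^{t-1}} \;=\; \sigma_1(W^{(t)}) \;=\; \rho(W^{(t)}) \;\leq\; \|W^{(t)}\|,
\end{equation*}
and taking $2^{1-t}$-th roots yields $\sigma_1(W) \leq \|W^{(t)}\|^{2^{1-t}}$.

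For the limit, I would apply Gelfand's formula to $A := W^*W$: since $W^{(t)} = (W^*W)^{2^{t-2}}$, the subsequence $\|W^{(t)}\|^{1/2^{t-2}} = \|A^{2^{t-2}}\|^{1/2^{t-2}}$ converges to $\rho(W^*W) = \sigma_1(W)^2$, which after a square root gives $\|W^{(t)}\|^{2^{1-t}} \to \sigma_1(W)$.

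For the R-quadratic rate, I would exploit equivalence of matrix norms in finite dimension: there exist constants $c_1, c_2 > 0$ with $c_1 \|M\|_F \leq \|M\| \leq c_2 \|M\|_F$. Combined with the SVD expression, $\|W^{(t)}\|_F = \bigl(\sum_i \sigma_i(W)^{2^t}\bigr)^{1/2}$, so
\begin{equation*}
\|W^{(t)}\|^{2^{1-t}} \;\in\; \bigl[c_1^{2^{1-t}},\, c_2^{2^{1-t}}\bigr] \cdot \Bigl(\sum_i \sigma_i(W)^{2^t}\Bigr)^{1/2^t}.
\end{equation*}
Writing $\alpha_i = \sigma_i(W)/\sigma_1(W) \in [0,1]$, the rightmost factor equals $\sigma_1(W)\bigl(1 + \sum_{i \geq 2} \alpha_i^{2^t}\bigr)^{1/2^t}$, and using $\log(1+x) \leq x$ plus $(1+x)^{1/2^t} - 1 \leq x/2^t$, one bounds the error by a quantity of order $\alpha_2^{2^t}$, a double-exponential decay. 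Since the norm-equivalence prefactors $c_j^{2^{1-t}} \to 1$, this double-exponential term dominates, which is exactly R-quadratic convergence.

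The main obstacle will be the rate: the convergence and upper-bound parts are essentially bookkeeping on top of Gelfand and $\rho \leq \|\cdot\|$, but establishing R-quadratic convergence for an arbitrary matrix norm (rather than Frobenius, where the computation is transparent) requires the norm-equivalence trick, and one must check the constants $c_1^{2^{1-t}}, c_2^{2^{1-t}}$ do not spoil the rate. The edge cases $\sigma_1(W)=0$ (trivial) and repeated top singular value (constant changes but rate persists) should be checked but are benign.
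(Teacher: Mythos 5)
Your SVD unrolling, the upper bound via $\rho(W^{(t)}) \le \norm{W^{(t)}}$ applied to the Hermitian PSD iterates ($t \ge 2$), and the Gelfand-formula limit are sound and essentially coincide with the paper's route (the paper likewise invokes Gelfand's formula for both the limit and the upper bound, and norm equivalence for the rate). The genuine gap is in the R-quadratic step, precisely where you locate the ``main obstacle'': the norm-equivalence prefactors \emph{do} spoil the rate. From $c_1\norm{M}_F \le \norm{M} \le c_2\norm{M}_F$ you get
\begin{equation*}
c_1^{2^{1-t}}\,\norm{W^{(t)}}_F^{2^{1-t}} \;\le\; \norm{W^{(t)}}^{2^{1-t}} \;\le\; c_2^{2^{1-t}}\,\norm{W^{(t)}}_F^{2^{1-t}},
\qquad c_j^{2^{1-t}} - 1 \sim 2^{1-t}\ln c_j \ ,
\end{equation*}
so the prefactors contribute an error that decays only geometrically, like $2^{-t}$; this \emph{dominates} the double-exponential term of order $\alpha_2^{2^t}$ rather than being dominated by it, and your sentence asserting the opposite is backwards. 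Norm equivalence alone can therefore only deliver R-linear convergence for a general norm. The degenerate case you call benign also genuinely fails: a sequence whose error exceeds $c\,2^{-t}$ infinitely often cannot be R-quadratic, because any Q-quadratically vanishing majorant eventually decays double-exponentially and drops below $c\,2^{-t}$. Concretely, for $W = I_2$ and $\norm{\cdot}_F$ the error is $2^{2^{-t}} - 1 \sim 2^{-t}\ln 2$, and for the consistent norm $\norm{\cdot} := 2\norm{\cdot}_2$ the error is $\sigma_1(W)\,(2^{2^{1-t}}-1)$ for every nonzero $W$. So the R-quadratic claim cannot be proved for arbitrary matrix norms; it holds only for norms and spectra where the entire error is genuinely $O(\alpha_2^{2^t})$ with $\alpha_2 = \sigma_2(W)/\sigma_1(W) < 1$ (e.g.\ $\norm{\cdot}_2$ itself, or $\norm{\cdot}_F$ with a strictly dominant singular value).

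You are in good company: the paper's own lemma on R-quadratic convergence has the same defect. It majorizes the error by $\epsilon_t = L(\beta_1^{2^{-(t+1)}}-1) \sim L\,2^{-(t+1)}\ln\beta_1$ and claims $\epsilon_{t+1}/\epsilon_t^2 \to 1/6$, but that ratio in fact grows like $2^{t}/(L\ln\beta_1)$, so the constructed majorant is not Q-quadratic either. Apart from the rate, the only other loose end in your write-up is $t=1$: there $W^{(1)} = W$ is rectangular, your PSD argument does not apply, and $\sigma_1(W) \le \norm{W}$ can fail even for a consistent norm (e.g.\ $\norm{\cdot}_\infty$ applied to a column vector), although the theorem asserts the bound for all $t \ge 1$; the paper glosses over this case as well.
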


This theorem is implemented in Algo.~\ref{algo:gram_iteration_dense} with a rescaling to avoid overflow.
For that, matrix $W$ is rescaled at each iteration, and scaling factors are cumulated in variable $r$, to unscale the result at the end of the method, see Appendix~A3 of \cite{delattre_efficient_2023}. Unscaling is crucial as it is required to remain a strict upper bound on the spectral norm at each iteration of the method.

\begin{algorithm}[h]
\caption{: Gram\_iteration$(W, N_\text{iter})$}
\label{algo:gram_iteration_dense}
\begin{algorithmic}[1]
  \STATE \textbf{Inputs} matrix: $W$, number of iterations: $N_\text{iter}$
  \STATE  $r \gets 0$ \hfill \text{// initialize rescaling}
  \STATE \textbf{for} $1 \ldots N_\text{iter}$
    \STATE  \quad $r \gets 2(r + \log \norm{W}_F)$ \hfill \text{// cumulate rescaling}
    \STATE \quad $W \gets W/\norm{W}_F$ \hfill \text{// rescale to avoid overflow}
    \STATE \quad $W \gets W^* W$ \hfill \text{// Gram iteration}
  \STATE $\sigma_1 \gets \norm{W}^{2^{-N_\text{iter}}} \exp{(2^{-N_\text{iter}} r})$
  \STATE \textbf{return} $\sigma_1$
\end{algorithmic}
\end{algorithm}

Remark that the first part of the theorem has been mentioned in \cite{friedland1991revisiting}, but for square matrix only, and without practical implementation and experimental validation.

%%%%%%%%%%%%%%%%%%%%%%%%%%%%%%%%%%%%%%%%%%%%%%%%%%%%%%

\section{Spectral norm of convolutional layers}
\label{section:spectral_norm_toep_and_circ}

%%%%%%%%%%%%%%%%%%%%%%%%%%%%

\subsection{Gram iteration to compute the spectral norm of $C$}
Using Equation~(\ref{equation:lip_conv_circu_expression}), one can compute the spectral norm of the convolutional layer represented by $C$ by computing $n^2$ spectral norm of $\cout \times \cin$ matrices. We use the same algorithm as from \cite{delattre_efficient_2023}.
First, a 2D fast Fourier transform (FFT2) is applied on every $2$-d kernel for every $i, j$ input canal and output canal, this operation is denoted as $\mathrm{FFT2}(K)$ in the following algorithm. This operation is quite cheap for reasonably small $n$ as complexity is in $O(n \log_2(n))$.

\begin{algorithm}[h]
\caption{: norm2\_circ$(K, N_\text{iter})$}
\label{algo:gram_iteration_circ}
\begin{algorithmic}[1]
  \STATE \textbf{Inputs} filter: $K$, number of iterations: $N_\text{iter}$
  \STATE $D \gets \mathrm{FFT2}(K) $ \hfill \text{// FFT}
  \STATE $r \gets 0_{n^2}$ \hfill \text{// initialize rescaling}
  \STATE \textbf{for} $1 \ldots N_\text{iter}$
  \STATE \quad \textbf{for} \text{i} \textbf{in} $1 \ldots n^2$ \hfill \text{// for-loop in parallel}
    \STATE  \quad \quad $r_i \gets 2
    (r_i + \log\norm{D_i}_F)$ \hfill \text{// cumulate rescaling}
    \STATE \quad \quad $D_i \gets D_i/\norm{D_i}_F ~ $ \hfill \text{// rescale to avoid overflow}
    \STATE \quad \quad $D_i \gets D_i^* D_i$ \hfill \text{// Gram iteration}
  \STATE  $\sigma_1 \gets \max_i \left\{ \norm{D_i}_F^{2^{-N_\text{iter}}} \exp{(2^{-N_\text{iter}} r_i}) \right\}$
  \STATE \textbf{return} $\sigma_1$
\end{algorithmic}
\end{algorithm}

Computing the gradient of singular values of the matrix can be tricky and prone to be ill-conditioned, for instance with SVD as shown in \cite{song_why_2021, wang_robust_2022}.
For \emph{Gram iteration } the gradient of $\max_i \{ \norm{D_i}_F^{2^{-N_\text{iter}}} \}$ w.r.t $K$ can be computed explicitly in a stable way, as shown in \cite{delattre_efficient_2023}. 
    
%%%%%%%%%%%%%%%%%%%%%%%%%%%%
\subsection{Gram iteration to compute the spectral norm of $T$}
For a convolutional layer that uses zeros-padding, this operator can be represented as a concatenation of banded doubly block Toeplitz matrix $T$, this $n^2 \cout \times n^2 \cin$ matrix is very large spectral norm computation can not be done directly on it.
The convolutional operator can be used to ease the computation on the large matrix $T$ as used in \cite{wang2020orthogonal} and \cite[Appendix A.1]{avidan_almost-orthogonal_2022}.
Using this and the Gram iteration previously introduced, we present a bound on the spectral norm of $T$.

\begin{thm}
\label{thm:bound_spectral_norm_toeplitz}
For a zeros-padded convolutional layer $T$ parametrized by a filter $K \in \mathbb{R}^{\cout \times \cin \times k \times k}$,
let us note the Gram iterate for matrix $T$ as $T^{(t)}$. 
We define the Gram iterate for filter as $K^{(t+1)}_{i_1, i_2} = \sum_{j=1}^\cout K^{(t)}_{j, i_1} \star K^{(t)}_{j, i_2}$, with convolution done with maximal non trivial padding, with $K^{(1)} = K$.  
\\
For norm $||~\cdot~||_{\infty}$, we have the following bound:
\begin{align*}
    \norm{T^{(t+1)}}_\infty^{2^{-{t}}} 
    \leq  
    \left( \max_{i_2} \sum_{i_1, k^\prime, l^\prime} \left| \sum_{j=1}^\cout   K^{(t)}_{j, i_1} \star K^{(t)}_{j, i_2} \right|_{k^\prime, l^\prime} 
    \right)^{2^{-{t}}} \ .
\end{align*}
For norm $||~\cdot~||_F$, we have: 
\begin{align*}
    \norm{T^{(t+1)}}_F^{2^{-{t}}}
    \leq \left( 
    k^2 \sum_{i_1, i_2, k^\prime, l^\prime} \left| \sum_{j=1}^\cout   K^{(t)}_{j, i_1} \star K^{(t)}_{j, i_2} \right|_{k^\prime, l^\prime}^2 
    \right)^{^{2^{-(t+1)}}} ,
\end{align*}
and $\sigma_1(T) \leq || T^{(t)} ||^{2^{-t}} \underset{t \to \infty}{\longrightarrow} \sigma_1(T)$ with quadratic convergence.
\end{thm}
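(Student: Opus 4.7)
The plan is to combine the general Gram iteration of Theorem~\ref{thm:main_result}, applied directly to $W=T$, with a structural identity that re-expresses $T^{(t+1)}=(T^{(t)})^* T^{(t)}$ as the matrix of a multi-channel zero-padded convolution whose kernel is exactly the iterate $K^{(t+1)}$ from the statement. Since both $\|\cdot\|_\infty$ and $\|\cdot\|_F$ are consistent matrix norms, Theorem~\ref{thm:main_result} already yields $\sigma_1(T)\le\|T^{(t)}\|^{2^{1-t}}$ together with R-quadratic convergence to $\sigma_1(T)$; what remains is to upper bound $\|T^{(t+1)}\|_\infty$ and $\|T^{(t+1)}\|_F$ purely in terms of the small filter $K^{(t+1)}$ so that the huge matrix $T^{(t+1)}$ itself is never formed.

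I would first prove the structural identity by induction on $t$. The base case $T^{(1)}=T$ is the Toeplitz matrix of $K^{(1)}=K$ by construction. For the inductive step, two facts suffice: the adjoint of a multi-channel zero-padded convolution is a zero-padded convolution with spatially flipped kernel and swapped input/output channel indices, and the composition of two zero-padded convolutions is itself a zero-padded convolution whose kernel is the spatial convolution of the two kernels, summed over the contracted intermediate channel index. Chaining these on $(T^{(t)})^* T^{(t)}$ reproduces exactly $K^{(t+1)}_{i_1,i_2}=\sum_{j=1}^\cout K^{(t)}_{j,i_1}\star K^{(t)}_{j,i_2}$, and the ``maximal non trivial padding'' convention on $\star$ is chosen precisely so that no nonzero term of the composed kernel is discarded.

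Given this identity, both bounds reduce to row-level counting on a Toeplitz convolutional matrix. For $\|\cdot\|_\infty$, the operator norm equals the maximum absolute row sum; each row is indexed by an output channel $i_2$ and a spatial position, and its nonzero entries are, up to inactive boundary terms that only reduce the sum, the values $K^{(t+1)}_{i_1,i_2,k',l'}$, so maximizing over $i_2$ recovers the asserted right-hand side. For $\|\cdot\|_F$, a direct expansion gives $\|T^{(t+1)}\|_F^2=\sum_{i_1,i_2,k',l'}|K^{(t+1)}_{i_1,i_2,k',l'}|^2\,N(k',l')$, where $N(k',l')$ counts the output positions whose receptive field contains the offset $(k',l')$; a uniform bound $N(k',l')\le k^2$ under the paper's padding convention produces the Frobenius prefactor, and raising to the appropriate dyadic exponents completes the two displayed inequalities.

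The upper-bound and quadratic-convergence conclusion $\sigma_1(T)\le\|T^{(t)}\|^{2^{1-t}}\to\sigma_1(T)$ then passes directly through from Theorem~\ref{thm:main_result}, since the kernel-level bounds overestimate $\|T^{(t+1)}\|$ by at most a dimension-dependent constant $C$ and $C^{2^{-t}}\to 1$ preserves the R-quadratic rate. The step I expect to require the most care is the structural identity: tracking how the zero-padded boundary of $T^{(t)}$ behaves under adjoint and composition so that $(T^{(t)})^* T^{(t)}$ really is the Toeplitz matrix of $K^{(t+1)}$ rather than only a near-Toeplitz corruption, together with justifying the specific $k^2$ counting factor in the Frobenius step under the paper's precise padding conventions.
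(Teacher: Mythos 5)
Your proposal follows essentially the same route as the paper's proof: it rests on the identity $T^{(t+1)}_{i_1,i_2,k_1 l_1,k_2 l_2} = \bigl(\sum_{j=1}^{\cout} K^{(t)}_{j,i_1}\star K^{(t)}_{j,i_2}\bigr)_{k_2-k_1,\,l_2-l_1}$ (which the paper states for $t=1$ citing the AOL appendix and then extends to general $t$), bounds $\|\cdot\|_\infty$ by the maximum absolute row sum with boundary terms only decreasing the sum, bounds $\|\cdot\|_F$ by the same entry-counting argument with the $k^2$ prefactor, and inherits the upper bound and quadratic convergence from Theorem~\ref{thm:main_result}. Your explicit induction via adjoint and composition of zero-padded convolutions, and your remark that the overestimation constant is killed by the exponent $2^{-t}$, are just a more careful spelling-out of steps the paper leaves implicit.
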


In practice, the bounding term is quite tight as it corresponds to the case where input is padded to the maximum non-trivial padding case.
Pairing with Equation~(\ref{eq:toeplitz_gram_conv_relation}), we have  Algo.~\ref{algo:gram_iteration_toep} for the choice of matrix norm $||~.~||_{\infty}$.

\begin{algorithm}[h]
\caption{: norm2\_toep$(K, N_\text{iter})$}
\label{algo:gram_iteration_toep}
\begin{algorithmic}[1]
  \STATE \textbf{Inputs} filter: $K$, number of iterations: $N_\text{iter}$
  \STATE $r \gets 0$
  \STATE \textbf{for} $1 \ldots N_\text{iter}$
    \STATE  \quad  $r \gets 2
    (r + \log\norm{K}_F)$
    \STATE \quad  $K \gets K/\norm{K}_F ~ $
    \STATE \quad  $K \gets \mathrm{conv2d}(K, K, \mathrm{padding}=\mathrm{kernel\_size}(K)-1)$
  \STATE  $\sigma_1 \gets \max_i \left\{ \left(\sum_{j, k_1, k_2} |K_{j, i, k_1, k_2}|\right)^{2^{-N_\text{iter}}} \exp{(2^{-N_\text{iter}} r}) \right\}$
  \STATE \textbf{return} $\sigma_1$
\end{algorithmic}
\end{algorithm}

% \blaise{link avec im2Col, RKO, Th 3 de Tsuzuku}

%%%%%%%%%%%%%%%%%%%%%%%%%%%%%%%%%%%%%%%%%%%%%%%%%%%%%%
\section{Bound Approximations\\for circular convolutional layers}
\label{section:approx}
For images with large spatial size $n$, computing $n^2$ spectral norm of $\cin \times \cout $ matrices $D_i$ can be quite expensive. We explore how the bound derived for a lower input spatial size $n_0$ is related to the one for a higher input spatial size $n$, see Theorem~\ref{thm:bound_approximation_for_lower_input_size}.
Using similar techniques we also derive a relationship between the spectral norm bound gap between zeros-padded convolution (Toeplitz matrix) and circular padding convolution (circulant matrix) see Theorem~\ref{thm:bound_spectral_norm_toeplitz}.

It this section, ${D_{u,v}^\Uparrow}^{(t)}$ and ${D_{u,v}^\uparrow}^{(t)}$ denote respectively the $t^\text{th}$ iterate of Gram of matrix $D_{u,v}^\Uparrow$ and $D_{u,v}^\uparrow$.

%%%%%%%%%%%%%%%%%%%%%%%%%%%%
\subsection{Approximation for lower input spatial size $n_0 \leq n$}
\label{section:approx_n0}

A bound on spectral norm can be produced with Gram iteration as in Theorem~\ref{thm:main_result}:
\begin{equation*}
  \sigma_1(C^\Uparrow)
  = \max_{1 \leq u,v\leq n} \sigma_1(D_{u,v}^\Uparrow) 
  \leq \max_{1 \leq u,v \leq n} \norm{{D^\Uparrow}^{(t)}_{u,v}}_F^{2^{1-t}} \ .
\end{equation*}

To consider a very large input spatial size $n$, we can use our bound by approximating the spatial size for $n_0 \leq n$. It means we pad the kernel $K$ to match the spatial dimension $n_0 \times n_0$, instead of $n \times n$. To compensate for the error committed by the sub-sampling approximation, we multiply the bound by a factor $\alpha$. 
The work of \cite{pfister2019bounding} analyzes the quality of approximation depending on $n_0$ and gives an expression for factor $\alpha$ to compensate and ensure to remain an upper-bound, as studied in \cite{araujo2021lipschitz}.

We can estimate the spectral norm bound for a convolution defined by kernel $K$ and input size $n$ with the spectral norm bound for a lower input size $n_0 \leq n$.

\begin{thm} (Bound approximation for lower input size) \\
\label{thm:bound_approximation_for_lower_input_size}
%Let  $D_{u,v}^\Uparrow = \sum_{k_1=0}^{n-1} \sum_{k_2=0}^{n-1} U_{k_1, u}^\Uparrow \ K^\Uparrow_{:,:,k_1, k_2} \ U_{k_2, v}^\Uparrow $ and 
%$D_{u,v}^\uparrow = \sum_{k_1=0}^{n-1} \sum_{k_2=0}^{n-1} U_{k_1, u}^\uparrow \ K^\uparrow{:,:,k_1, k_2} \ U_{k_2, v}^\uparrow $. \\
For $n_0 \geq 2^{t} \lfloor \frac{k}{2}\rfloor + 1$, $\alpha = \frac{2^{t} \lfloor \frac{k}{2}\rfloor}{n_0}$, the spectral norm of the circular convolutional layer can be bounded by:
\begin{equation*}
  \sigma_1(C^\Uparrow) \leq \left(\frac{1}{1 - \alpha}\right)^{2^{-t}}  \max_{1 \leq u,v \leq n_0} \norm{{D^\uparrow}^{(t)}_{u,v}}_F^{2^{1-t}} \ .
\end{equation*}
\end{thm}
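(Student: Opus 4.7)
My plan is to combine Theorem~\ref{thm:main_result} with a quantitative coarse-sampling inequality for non-negative trigonometric polynomials. Applying Theorem~\ref{thm:main_result} blockwise through the block-diagonalization of Theorem~\ref{theorem:block_diagonalization} already yields
\[
\sigma_1(C^\Uparrow) \;\leq\; \max_{1 \leq u,v \leq n} \norm{{D^\Uparrow}^{(t)}_{u,v}}_F^{2^{1-t}},
\]
so it suffices to bound this fine-grid maximum by the analogous quantity on the coarser $n_0 \times n_0$ grid, up to the factor $(1-\alpha)^{-2^{-t}}$.

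First, I would make the trigonometric-polynomial structure explicit. An induction on $t$ using the filter-level Gram recursion $K^{(t+1)}_{i_1,i_2} = \sum_j K^{(t)}_{j,i_1} \star K^{(t)}_{j,i_2}$ (full, non-trivial padding, as in Theorem~\ref{thm:bound_spectral_norm_toeplitz}), together with the fact that full 2D convolution of kernels of support $s$ produces support $2s-1$, shows that $K^{(t)}$ has spatial support bounded by $2^{t-1}(k-1)+1 \leq 2^t \lfloor k/2 \rfloor + 1$ in each direction. Because the matrix-level Gram iteration $D^{(t+1)}_{u,v} = (D^{(t)}_{u,v})^* D^{(t)}_{u,v}$ is the Fourier counterpart of that spatial recursion, each entry $({D^\Uparrow}^{(t)}_{u,v})_{i_1,i_2}$ equals the 2D DFT of $K^{(t)}_{:,:,i_1,i_2}$ evaluated at $(u/n, v/n)$. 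Hence the map
\[
p(x,y) \;:=\; \sum_{i_1,i_2} \bigl|\widehat{K^{(t)}_{:,:,i_1,i_2}}(x,y)\bigr|^2
\]
extends to a non-negative bivariate trigonometric polynomial of degree at most $d := 2^t \lfloor k/2 \rfloor$ in each variable, and agrees with $\norm{{D^\Uparrow}^{(t)}_{u,v}}_F^2$ on the fine grid and with $\norm{{D^\uparrow}^{(t)}_{u_0,v_0}}_F^2$ on the coarse grid.

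Next I would invoke a 2D coarse-sampling estimate in the spirit of \cite{pfister2019bounding}: for any non-negative bivariate trigonometric polynomial of degree $\leq d$ in each variable and any $n_0$ with $\alpha := d/n_0 < 1$,
\[
\sup_{(x,y)} p(x,y) \;\leq\; \frac{1}{1-\alpha} \max_{1 \leq u_0,v_0 \leq n_0} p(u_0/n_0, v_0/n_0).
\]
Chaining $\max_{u,v} p(u/n,v/n) \leq \sup p$ with the above, raising both sides to the power $2^{-t}$, and composing with the bound from Theorem~\ref{thm:main_result} delivers the stated inequality with $\alpha = 2^t \lfloor k/2 \rfloor / n_0$.

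The main obstacle I anticipate is the bivariate coarse-sampling constant. The one-dimensional version of Pfister's inequality is classical, but a naive tensorization yields the looser factor $(1-\alpha)^{-2}$ in 2D. Recovering exactly $(1-\alpha)^{-1}$ requires either a direct bivariate argument or a sharper one-shot reduction that exploits non-negativity of $p$ along both axes simultaneously; pinning down this constant is the technical crux. The support count and the Fourier identification are, by contrast, routine inductive checks, and one should also verify that $2^{t-1}(k-1)+1 \leq 2^t \lfloor k/2 \rfloor + 1$ holds for even $k$ so that the degree bound used in $\alpha$ remains valid in that case.
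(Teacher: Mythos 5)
Your proposal follows essentially the same route as the paper: the paper defines the spectral density matrix $E(w_1,w_2)$ (the DTFT of $K$), observes that $\norm{E^{(t)}(\cdot,\cdot)}_F^2$ is a non-negative trigonometric polynomial of degree $2^{t}\lfloor k/2\rfloor$ agreeing with $\norm{{D^\Uparrow}^{(t)}_{u,v}}_F^2$ on the fine grid and with $\norm{{D^\uparrow}^{(t)}_{u,v}}_F^2$ on the coarse grid, and then applies a bivariate sampling inequality adapted from Pfister with constant $(1-\alpha)^{-1}$ before chaining with the Gram-iteration upper bound --- exactly your plan, with your spatial-support induction on $K^{(t)}$ being the Fourier-dual of the paper's degree count on $E^{(t)}$. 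The bivariate constant you flag as the technical crux is not re-derived in the paper either; it is invoked as a citation-level adaptation of Theorem~1 of \cite{pfister2019bounding}, so your proposal matches the published argument.
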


\begin{figure}
    \centering
    \includegraphics[width=0.8\linewidth]{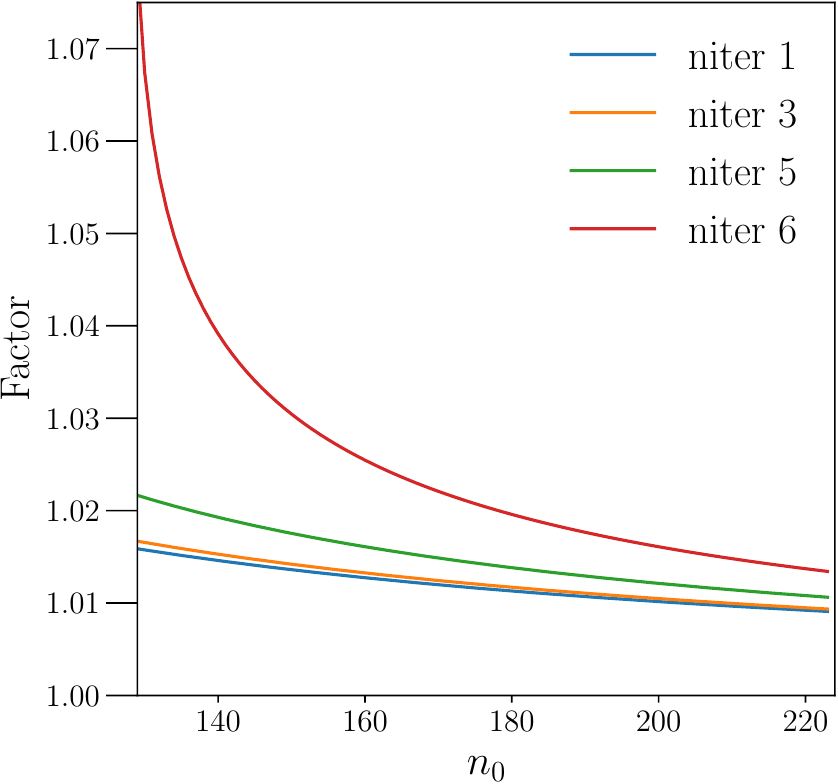}
    \caption{Evolution of bound factor $(1/(1 - \alpha))^{2^{-t}}$, described in Theorem~\ref{thm:bound_approximation_for_lower_input_size}, for input size $n=224$, kernel size $k=3$ and number of Gram iterations  $t \in \{1, 3, 5, 6 \}$. }
    \label{fig:alpha_factor_evolution}
\end{figure}

We illustrate the evolution of the factor term $\left(\frac{1}{1 - \alpha}\right)^{2^{-t}}$ in Figure~\ref{fig:alpha_factor_evolution} for different $n_0$ and for different number of Gram iteration $t$.  We see that the more the number of iterations $t$ increases the larger the factor, however, the estimation of the spectral norm is also much more accurate.
For a very large image size $n$, this theorem allows the computation of a tight bound on the spectral norm of a convolutional layer at a low computational cost w.r.t input spatial size.

%%%%%%%%%%%%%%%%%%%%%%%%%%%%

\subsection{Approximation between circular and zero paddings}
\label{section:approx_circ}

In CNN architectures, zero-padded convolutions are more common than circular ones. We want to use the circular convolution theory to obtain a bound on the spectral norm of zero padding convolution.

\begin{thm}
\label{thm:bound_circ_toep} (Bound approximation for zero padding) \\
For $n \geq 2^{t} \lfloor \frac{k}{2}\rfloor + 1$, $\alpha = \frac{2^{t} \lfloor \frac{k}{2}\rfloor}{n}$, the spectral norm of the zero-padded convolutional layer can be bounded by:
\begin{equation*}
     \sigma_1(T^\Uparrow) \leq \left(\frac{1}{1 - \alpha}\right)^{2^{-t}} \max_{1 \leq u,v \leq n} \norm{{D^\Uparrow}^{(t)}_{u,v}}_F^{2^{1-t}} \ .
\end{equation*}
\end{thm}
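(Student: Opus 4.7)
The plan is to deduce Theorem~\ref{thm:bound_circ_toep} from Theorem~\ref{thm:bound_approximation_for_lower_input_size} via the standard embedding of a zero-padded convolution into a larger circular convolution acting on zero-extended inputs.

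First, I would fix any integer $N \geq n + k - 1$ and let $C_N$ denote the multi-channel circular convolution with kernel $K$ acting on spatial size $N \times N$. Introduce the zero-extension operator $E : \mathbb{R}^{\cin n^2} \to \mathbb{R}^{\cin N^2}$, which places an $n \times n$ input in a fixed corner of the $N \times N$ canvas with zeros elsewhere, and the restriction operator $R : \mathbb{R}^{\cout N^2} \to \mathbb{R}^{\cout n^2}$, which extracts the corresponding $n \times n$ output window. Because $N \geq n + k - 1$, the kernel support (of width $k$) centered at any output pixel in the window cannot reach the embedded input from the wrapped side, so the circular and zero-padded convolutions produce identical values there. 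This gives the factorization $T^\Uparrow = R\, C_N\, E$. Since $R$ and $E$ are coordinate embeddings with operator norm~$1$, submultiplicativity of the spectral norm yields
$$\sigma_1(T^\Uparrow) \leq \sigma_1(C_N).$$

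Second, I would apply Theorem~\ref{thm:bound_approximation_for_lower_input_size} to $C_N$, playing the role of $C^\Uparrow$ with "large" spatial size $N$, using the approximation at the "small" size $n_0 = n \leq N$. The hypothesis $n_0 \geq 2^{t} \lfloor k/2 \rfloor + 1$ becomes exactly our assumption $n \geq 2^{t} \lfloor k/2 \rfloor + 1$, and the correction factor $\alpha = 2^{t} \lfloor k/2 \rfloor / n_0$ reduces to the $\alpha$ of the current statement. At $n_0 = n$ the blocks ${D^\uparrow}^{(t)}_{u,v}$ of the previous theorem coincide with the blocks ${D^\Uparrow}^{(t)}_{u,v}$ used here, so
$$\sigma_1(C_N) \leq \left(\frac{1}{1-\alpha}\right)^{2^{-t}} \max_{1 \leq u,v \leq n} \|{D^\Uparrow}^{(t)}_{u,v}\|_F^{2^{1-t}}.$$
Chaining the two inequalities yields the claim, and the choice of $N$ is irrelevant because the right-hand side depends only on $n$ and $K$.

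The main obstacle is verifying the factorization $T^\Uparrow = R\, C_N\, E$ at the coordinate level: one must check that each output pixel in the extracted window sees the embedded input only through non-wrapped taps of the circular kernel, so that no periodic contribution pollutes the zero-padded output. Once this identity is made rigorous (which is precisely why the condition $N \geq n + k - 1$ is needed), the rest of the argument reduces to a one-line invocation of Theorem~\ref{thm:bound_approximation_for_lower_input_size}.
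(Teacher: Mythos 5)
Your proof is correct, but it reaches the result by a different first step than the paper. The paper's proof invokes Lemma~4 of the reference \cite{yi2020asymptotic} to get $\sigma_1(T^\Uparrow) \leq \sup_{w_1,w_2} \sigma_1(E(w_1,w_2))$, where $E$ is the spectral density matrix of Eq.~(\ref{eq:spectral_density_matrix}), then applies the Gram upper bound pointwise in frequency and finishes with the sampling inequality of Lemma~\ref{lemma:inequality_btw_max_spectral_norm_density} at $n$ points. You instead derive the needed comparison from first principles: the factorization $T^\Uparrow = R\,C_N\,E$ with $N \geq n+k-1$ exhibits $T^\Uparrow$ as a compression of a large circulant by partial isometries, giving $\sigma_1(T^\Uparrow) \leq \sigma_1(C_N)$, after which Theorem~\ref{thm:bound_approximation_for_lower_input_size} applied with large size $N$ and $n_0 = n$ yields exactly the stated bound (correctly noting that the right-hand side is independent of $N$, since the proof of that theorem passes through the supremum over the full torus). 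The two routes converge: $\sigma_1(C_N) = \max_{\Omega_N^2}\sigma_1(E(w_1,w_2)) \leq \sup_{[0,2\pi]^2}\sigma_1(E(w_1,w_2))$, so your embedding argument is essentially a self-contained proof of the cited density lemma, and the remaining machinery (Gram upper bound plus trigonometric-polynomial sampling) is shared. What your version buys is the removal of the external citation and a clean reduction of Theorem~\ref{thm:bound_circ_toep} to the already-proven Theorem~\ref{thm:bound_approximation_for_lower_input_size}; what it costs is the coordinate-level verification of the no-wrap-around identity, which you correctly identify as the crux and for which your condition $N \geq n+k-1$ is indeed sufficient. No gap.
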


This result can be used when computing the bound on the spectral norm of convolution layers with circular padding is more convenient than zero padding, namely because circulant matrix theory allows the use of Fourier transform to ease computation. It can be crucial in the case of large kernel size $k$.

%%%%%%%%%%%%%%%%%%%%%%%%%%%%%%%%%%%%%%%%%%%%%%%%%%%%%%

\section{Spectrally Rescaled Layers}
\label{section:spectral_rescaling}

\subsection{Spectral Rescaling (SR)}
The work \cite{miyato2018spectral} introduces spectral normalization (SN) for layers, dividing all singular values by the spectral norm, \ie largest singular value. Squashing them towards 0, SN generates an ill-conditioned matrix.
The work \cite{avidan_almost-orthogonal_2022} introduces an almost-orthogonal Lipschitz (AOL) rescaling. It is expressed as a diagonal matrix $R_{ii} = \left(\sum_{j=1}^q | W^\top W|_{i,j} \right)^{-1/2}$ such that spectral norm of $WR$ is bounded by 1.
The matter is that the spectral norm of the rescaled matrix $\norm{WR}_2$ can be way lower than $1$. Whereas with spectral normalization the spectral norm is exactly one, but the condition number of the resulting matrix can be very high which is not the case with the rescaling AOL.

We introduce a tight spectral rescaling (SR) that converges to a spectral norm of $1$ and provides a good condition number in contrast to spectral normalization.
We design this rescaling by generalizing $W^\top W$ to the $t$-th iterate of Gram of the matrix $W$.

\begin{thm}
\label{thm:improve_aol}
    For any $W \in \mathbb{R}^{p \times q}$, integer $t \geq 1$, define $W^{(t+1)} = {W^{(t)}}^\top W^{(t)}$, with $W^{(0)} = W$. We define the spectral rescaling
    $R^{(t)}$ as the diagonal matrix with 
    $R^{(t)}_{ii} = \left( \sum_j \left| W^{(t+1)} \right|_{ij} \right)^{-2^{-(t+1)}}$ if the expression in the brackets is non zero, or $R_{ii} = 0 $ otherwise.
    
    Then  $\sigma_1(W R^{(t)}) \underset{t \to +\infty}{\longrightarrow} 1$, with quadratic convergence  and the iterates upper bounds the limit $\sigma_1(W R^{(t)}) \leq 1$.
\end{thm}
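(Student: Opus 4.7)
Let $M := W^\top W$, and observe by induction on the recursion that $W^{(t+1)} = M^{2^t}$ for all $t \geq 0$ (using that $M$, hence each $M^k$, is symmetric PSD). Writing $d^{(t)}_i := \sum_j |M^{2^t}|_{ij}$, we have $R^{(t)}_{ii} = (d^{(t)}_i)^{-2^{-(t+1)}}$.

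\emph{Upper bound $\sigma_1(WR^{(t)}) \le 1$.}
The plan is to establish the Loewner inequality $M \preceq (R^{(t)})^{-2}$, from which conjugation by the PSD diagonal $R^{(t)}$ yields $R^{(t)} M R^{(t)} \preceq I$, and hence $\sigma_1(WR^{(t)})^2 = \sigma_1(R^{(t)} M R^{(t)}) \leq 1$. To get this PSD inequality, I would first apply the elementary AOL-style bound (the $t=0$ case, proved from $|x_i x_j| \leq (x_i^2+x_j^2)/2$ and symmetry) to the symmetric PSD matrix $A = M^{2^{t-1}}$, which yields
\[
M^{2^t} = A^\top A \preceq \diag\bigl(d^{(t)}_i\bigr).
\]
The map $x \mapsto x^{1/2^t}$ is operator monotone on $[0,\infty)$ by Loewner--Heinz, so taking $2^t$-th roots on both sides of this PSD inequality preserves the order and gives $M \preceq \diag\!\bigl((d^{(t)}_i)^{2^{-t}}\bigr) = (R^{(t)})^{-2}$, closing the upper bound.

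\emph{Convergence and quadratic rate.}
For the matching lower bound I would use the elementary inequality $\sigma_1(WR) \geq \sigma_1(W)\min_i R_{ii}$ (obtained by plugging the test vector $y = R^{-1}v_1/\|R^{-1}v_1\|$, where $v_1$ is the top right singular vector of $W$, into the variational form of $\sigma_1(WR)$). This yields
\[
\sigma_1(WR^{(t)}) \;\geq\; \frac{\sigma_1(W)}{\|M^{2^t}\|_\infty^{2^{-(t+1)}}}.
\]
Theorem~\ref{thm:main_result} applied to $M$ with the consistent norm $\|\cdot\|_\infty$ asserts that $\|M^{2^t}\|_\infty^{2^{-t}}$ converges R-quadratically and from above to $\sigma_1(M)=\sigma_1(W)^2$; taking square roots, the denominator above tends R-quadratically from above to $\sigma_1(W)$, so the lower bound tends to $1$ R-quadratically. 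The squeeze with the upper bound of $1$ then gives $\sigma_1(WR^{(t)}) \to 1$ with R-quadratic rate.

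\emph{Main obstacle.}
The subtle step is the operator-monotonicity invocation: both sides of $M^{2^t} \preceq \diag(d^{(t)}_i)$ must be genuinely PSD (which they are) and a vanishing $d^{(t)}_i$ requires care. In that case $M^{2^t} e_i = 0$ forces, by functional calculus on the PSD $M$, $We_i = 0$, so the $i$-th column of $WR^{(t)}$ is already zero regardless of the convention $R^{(t)}_{ii} = 0$. Restricting the PSD inequality to the orthogonal complement of $\ker W$ before applying Loewner--Heinz, and then extending by zero on $\ker W$, makes the argument fully rigorous.
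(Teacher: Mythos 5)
Your proof of the upper bound $\sigma_1(WR^{(t)}) \leq 1$ follows essentially the same route as the paper: both arguments apply the elementary AOL inequality to the Gram iterate $A = M^{2^{t-1}}$ (the paper phrases this as Theorem~3 of \cite{araujo_unified_2022} applied to $W^{(t)}$) to obtain $M^{2^t} \preceq \diag\bigl(d^{(t)}_i\bigr)$, and then pass to $2^{-t}$-th operator powers --- the paper iterates the implication $A^2 \preceq B^2 \Rightarrow A \preceq B$ for PSD matrices, you invoke Loewner--Heinz directly; these are interchangeable. You finish by explicit conjugation with $R^{(t)}$ where the paper cites Theorem~1 of \cite{araujo_unified_2022}, which amounts to the same computation, and your handling of the coordinates with $d^{(t)}_i = 0$ is correct and slightly more careful than the paper's. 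Where you genuinely depart from the paper is the second half of the claim: the paper's appendix proof stops at the Loewner inequality and never establishes the convergence $\sigma_1(WR^{(t)}) \to 1$ or the quadratic rate, whereas you supply a complete argument via the test-vector lower bound $\sigma_1(WR^{(t)}) \geq \sigma_1(W)\,\min_i R^{(t)}_{ii} = \sigma_1(W)\,\|M^{2^t}\|_\infty^{-2^{-(t+1)}}$, Theorem~\ref{thm:main_result} applied to $M$ with the consistent norm $\norm{\cdot}_\infty$, and a squeeze against the upper bound of $1$; this buys a proof of a part of the statement that the paper asserts but does not actually prove. The only caveat, which affects the theorem itself rather than your argument, is the degenerate case $\sigma_1(W) = 0$, where the limit is $0$ rather than $1$; for $W \neq 0$ your restriction of the argument to the complement of $\ker W$ makes everything go through.
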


\subsection{$1$-Lipschitz layer using Spectral Rescaling}

Using previous Theorem~\ref{thm:improve_aol} and Theorem~1 of \cite{araujo_unified_2022}, we can design the following two $1$-Lipschitz layers.
\begin{corol} 
\label{corol:layers_SR}
    We use notations of previous Theorem~\ref{thm:improve_aol}.
    Let a diagonal matrix $Q = \mathrm{diag}(q_i)$ with $q_i > 0$. 
    The spectral rescaling $R^{(t)}$ is defined as 
    $R^{(t)}_{ii} = \left( \sum_j \left| W^{(t)} \right|_{ij} \frac{q_i}{q_j}\right)^{-2^{-t}}$ if the expression in the brackets is non zero, or $R_{ii} = 0 $ otherwise.
    Then, the following rescaled layers are $1$-Lipschitz:
    \begin{itemize}
    \item the dense layer $x \mapsto WR^{(t)} x + b$ ,
    \item the residual layer $x \mapsto x - 2W {R^{(t)}}^{2} \phi(W^\top x + b)$ ,
    \end{itemize}
    with $b \in \mathbb{R}^q$ a biais, and $\phi$ a $1$-Lipschitz activation (\eg $\mathrm{ReLU}, \mathrm{tanh}, \mathrm{sigmoid}, \dots$).
\end{corol}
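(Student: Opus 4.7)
The plan is to reduce both the dense and residual claims to the single spectral inequality $\sigma_1(W R^{(t)}) \leq 1$, a $Q$-weighted extension of Theorem~\ref{thm:improve_aol}. Once this inequality is in hand, the dense layer is immediate since the Lipschitz constant of the affine map $x \mapsto A x + b$ is $\sigma_1(A)$; for the residual layer I invoke Theorem~1 of \cite{araujo_unified_2022}, whose sufficient $1$-Lipschitzness condition for $x \mapsto x - 2 W T \phi(W^\top x + b)$ reduces, via a Jacobian analysis using slopes $\phi' \in [0,1]$, to $W T W^\top \preceq I$; substituting $T = (R^{(t)})^2$ makes this equivalent to $\sigma_1(W R^{(t)}) \leq 1$.

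To prove the spectral bound I first establish a $Q$-weighted AOL-type PSD inequality: for any matrix $V$ and any positive diagonal $Q = \diag(q_i)$,
\begin{equation*}
V^\top V \;\preceq\; \diag\!\Big(\textstyle\sum_j |V^\top V|_{ij}\, q_i/q_j\Big).
\end{equation*}
The derivation is the AOL argument extended by the weighted AM--GM inequality $2|y_i y_j| \leq (q_i/q_j) y_i^2 + (q_j/q_i) y_j^2$: expand the quadratic form $y^\top V^\top V y$, dominate each cross term by this inequality, and use the symmetry of $V^\top V$ to collapse the resulting double sum into a diagonal quadratic form in $y$.

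Next I apply the weighted AOL to $V = W^{(t-1)}$, using $V^\top V = W^{(t)}$, which yields $W^{(t)} \preceq (R^{(t)})^{-2^t}$, where the right-hand side denotes the diagonal matrix with entries $(R^{(t)}_{ii})^{-2^t}$. The key transfer step is the L\"owner--Heinz theorem: $X \mapsto X^p$ is operator monotone on the PSD cone for every $p \in (0,1]$. Applying it with $p = 2^{1-t} \in (0,1]$ gives $(W^{(t)})^{2^{1-t}} \preceq (R^{(t)})^{-2}$, and because $W^{(t)} = (W^\top W)^{2^{t-1}}$ the left-hand side collapses to $W^\top W$. Conjugating by $R^{(t)}$ then gives $R^{(t)} W^\top W R^{(t)} \preceq I$, i.e.\ $\sigma_1(W R^{(t)}) \leq 1$.

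The main obstacle is this operator-monotonicity transfer. Without L\"owner--Heinz one only obtains a PSD bound on the high Gram iterate $W^{(t)}$, which does not directly constrain $W^\top W$ or $\sigma_1(W R^{(t)})$. The exponent $2^{-t}$ appearing in the definition of $R^{(t)}$ is tuned precisely so that $2^{1-t} \leq 1$ for every $t \geq 1$, which is exactly the range where $X \mapsto X^p$ is operator monotone and where the inequality can be pulled back to $W^\top W$; this is also the mechanism behind the quadratic convergence rate in Theorem~\ref{thm:improve_aol}. Everything beyond this is direct substitution.
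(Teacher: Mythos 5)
Your proof is correct and takes essentially the same route as the paper: both apply the $Q$-weighted AOL bound (Theorem~3 of \cite{araujo_unified_2022}, which you re-derive from scratch via weighted AM--GM) to a high Gram iterate and then pull the resulting PSD inequality back to $W^\top W \preceq (R^{(t)})^{-2}$ by operator monotonicity of the $2^{-t}$-th power, before invoking Theorem~1 of \cite{araujo_unified_2022} for the residual layer. The only cosmetic difference is that you cite L\"owner--Heinz for general $p \in (0,1]$ where the paper iterates the special case $A^2 \preceq B^2 \Rightarrow A \preceq B$ for PSD matrices, which amounts to the same fact for dyadic exponents.
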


Spectral rescaling is differentiable so training is possible through the rescaled layer as a whole. 
Previous Corollary~\ref{corol:layers_SR} can be applied to convolutional layers using Algorithm~\ref{algo:gram_iteration_toep} to compute the rescaling.

Note that for $t=1$, SR is equivalent to AOL. As $t \to \infty$ the SR is more similar to SN,
indeed the stable rank $\|W^{(t)}\|_F^2 / \|W^{(t)}\|_2^2$ of the Gram iterate matrix $W^{(t)}$,
is closer and closer to one.
The Gram iterate for large iteration number $t$ is numerically a rank one matrix and the rescaling is equivalent to spectral normalization. Therefore, our method is an interpolation between the AOL rescaling and the SN rescaling, the number of iterations $t$ is the cursor which allows us to select at which degree the rescaling is more AOL or SN. 
Note that SN used on the SLL block of \cite{araujo_unified_2022} retrieves the CPL block of \cite{meunier_dynamical_2022}.

One main advantage of our method over SN done with power iteration is that it provides a  guarantee that the layer is $1$-Lipschitz as the rescaling is upper bounding the singular values. We see empirically that power iteration (with $100$ iterations) can be a loose lower bound on spectral norm as depicted in Figure~\ref{fig:accuracy_time_conv_spectral_norm}. Moreover, SR is deterministic and converges quadratically, whereas SN with power iteration is random and converges linearly.

% \quentin{Est ce possible de généraliser le Th. 7 à $W \in \mathbb{C}$?}

% \quentin{De la même manière que 
% $W(W^t W)^{-1/2}$ est l'orthogonalisation optimale de $W$ au sens des moindres carrés,
% peut-on prouver que la Lipschitization du Th 7 est optimal selon un certain critère?}

\section{Numerical experiments}
\label{section:experiments}

For research reproducibility, the code of experiments is available at
\url{https://github.com/blaisedelattre/lip4conv}.

\begin{figure}[h]
    \centering
    \includegraphics[width=0.95\linewidth]{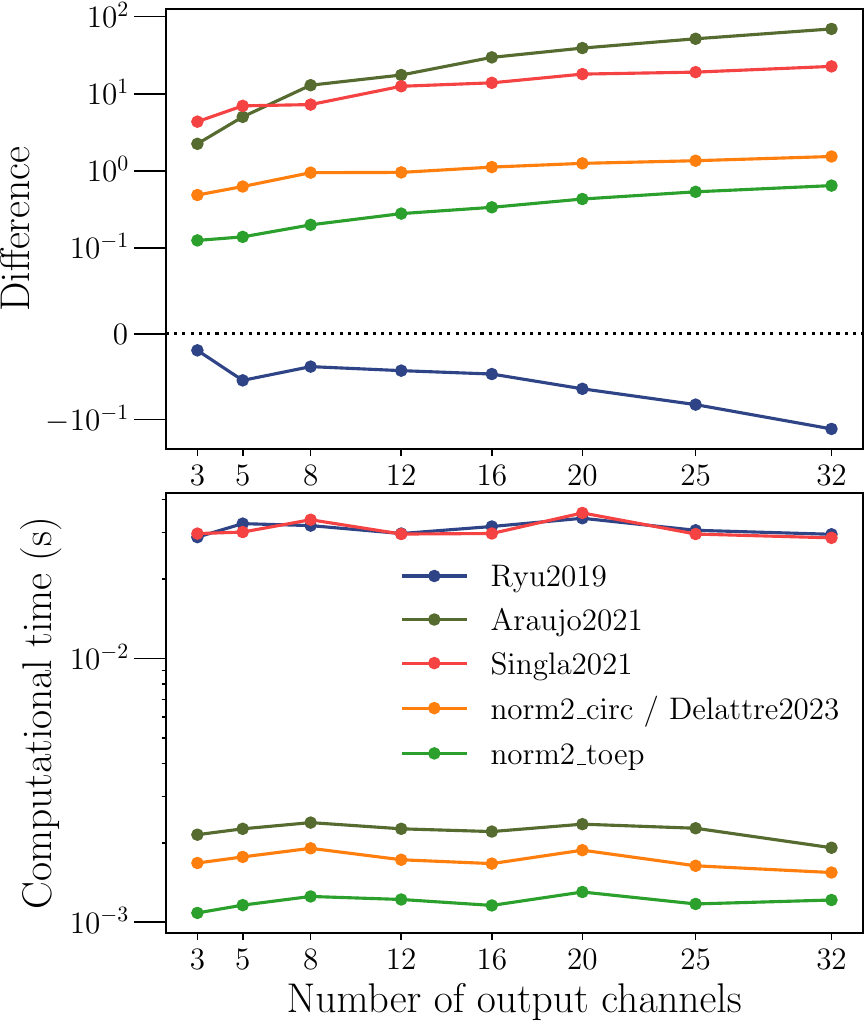}
    \caption{Estimation difference and computational times for spectral norm computation for zeros padded convolutional layers varying number of channels $\cin, \cout$, comparing different methods.
    Kernel size is $3$, input size is $32$.}
    \label{fig:accuracy_time_conv_spectral_norm}
\end{figure}

\subsection{Estimation of spectral norm of convolutional layers}
We assess the accuracy in the estimation of the spectral norm of zeros padded convolutional layers and the computational cost of state-of-the-art methods. The case of circular padded convolutional layer has been studied in \cite{delattre_efficient_2023}, however, they could not provide theoretical guarantees of upper bound for zeros padded convolutional layer.
We do not compare with the \cite{sedghi2019singular} method because \cite{delattre_efficient_2023} here $\mathrm{norm2\_toep}$, produces the same reference values for the spectral norm of the circular padding convolutional in a much shorter computational time. For ground truth, we generate random filters $K$ from Gaussian distribution and generate the associated concatenation of doubly banded Toeplitz matrix, as described in Section~\ref{section:circulant_and_toeplitz_matrices}, we then compute its spectral norm using NumPy $\mathrm{matrix\_norm}$. We use $6$ iterations for Gram iteration-based methods, and $100$ for power iteration \cite{Ryu19Plug}. We only reported for $1 \leq \cin, \cout \leq 32$ because otherwise the Toeplitz matrix would be too large and could not fit into memory. The $\mathrm{norm2\_circ}$ method is the circulant approach with the use of corrective factor $\alpha$ to ensure upper-bound property using Theorem~\ref{thm:bound_circ_toep}. 

The power iteration method has no guarantee to provide an upper bound on the spectral norm and it can be observed in the following Figure~\ref{fig:accuracy_time_conv_spectral_norm}:
Ryu2019's method is lower bounding the true spectral norm. Our method $\mathrm{norm\_toep}$ provides the best accuracies with low computational cost.

%%%%%%%%%%%%%%%%%%%%%%%%%%%%%

\begin{table}[h]
\caption{Accuracies in $\%$ for SLL model on CIFAR-10, for different rescalings: AOL and SR.}
\label{tab:evaluation_results_cifar10}
\centering
\begin{tabular}{|l|c|c|c|c|c|}
    \hline
    \textbf{Rescaling} & \textbf{Accuracy} & \multicolumn{4}{c|}{\textbf{Certified accuracy ($\varepsilon$)}} \\
    \cline{3-6}
    & & $0.141$ & $0.283$ & $0.423$ & $1$  \\
    \hline
    \text{AOL (S)} & 71.06 & 62.78 & 53.67 & 45.37 & 19.18  \\
    \text{SR (S)} & \textbf{72.44} & \textbf{63.49} & \textbf{54.66} & \textbf{46.01} & \textbf{19.62} \\
    \hline
    \text{AOL (M)} & 72.41 & 63.72 & 54.48 & 46.38 & 19.92  \\
    \text{SR (M)} & \textbf{73.38} & \textbf{64.64} & \textbf{55.43} & \textbf{46.79} & \textbf{20.18} \\
    \hline
\end{tabular}
\end{table}

\begin{table}[h]
\caption{Accuracies in $\%$ for SLL model on CIFAR-100, for different rescalings: AOL and SR.}
\label{tab:evaluation_results_cifar100}
\centering
\begin{tabular}{|l|c|c|c|c|c|}
    \hline
    \textbf{Rescaling} & \textbf{Accuracy} & \multicolumn{4}{c|}{\textbf{Certified accuracy ($\varepsilon$)}} \\
    \cline{3-6}
    & & $0.141$ & $0.283$ & $0.423$ & $1$  \\
    \hline
    \text{AOL (S) } & 46.05 & 35.13 & 26.72 & 20.49 & 0.073  \\
    \text{SR (S)} & \textbf{46.62} & \textbf{35.46} & \textbf{27.42} & \textbf{21.14} & \textbf{0.075}  \\
    \hline
    \text{AOL (M)} &  46.50 & 35.81 & 27.39 & 21.21 & 7.78  \\
    \text{SR (M)} & \textbf{47.50} & \textbf{36.50} & \textbf{28.54} & \textbf{21.89} & \textbf{8.13} \\
    \hline
\end{tabular}
\end{table}

\subsection{Certified robustness with spectrally rescaled layers}
This experiment demonstrates the impact of our new SR method for designing  1-Lipschitz layers to produce robust networks. Indeed, if each layer in the network is $1$-Lipschitz then the overall network is also $1$-Lipschitz. 

We evaluate our method on CIFAR-10 and CIFAR-100 datasets for a certified classification task and compare it with the original SLL block which uses AOL rescaling. Our method uses  Algo.~\ref{algo:gram_iteration_toep} with 3 iterations.
We take the same setting of training as \cite{araujo_unified_2022}, i.e. same architecture, loss, and optimizer. We run the experiment for the small (S) architecture which is a network of $20$ SLL convolutional layers followed by $7$ SLL dense layers, and medium (M) architecture $30$ SLL convolutional layers followed by $10$ SLL dense layers.

We report the natural accuracy and the certified accuracy for different levels of perturbation $\epsilon$ in Tables \ref{tab:evaluation_results_cifar10} and \ref{tab:evaluation_results_cifar100}, runs were averaged $3$ times.
Our rescaling provides better overall performance for different levels of perturbation $\epsilon$. We see that our method provides good scaling across the different sizes of architecture (small and medium) and the different datasets (CIFAR-10 and CIFAR-100).
Moreover, our bound is a strict upper bound to the spectral norm so the layers and the overall network can be certifiably Lipschitz-bounded.

% \quentin{Afin de compléter la Table, penses tu que ce soit possible de lancer un training de SLL avec la spectral normalization de Miyato?}

% Autres idées d'expés, si besoin:
% - repartir des expés ICML
% - réaliser expé VIII-B sur TinyImageNet
% - réaliser expé VIII-B avec un autre modèle que SLL (genre, CPL)
%

\section{Conclusion}
In this work, we leveraged the \emph{Gram iteration} algorithm to produce three main contributions.
We improved its theoretical foundation by generalizing it to any matrix norm and proving its quadratic convergence.
We extended the \emph{Gram iteration} to encompass zero padding convolutions, achieving state-of-the-art performance accuracy with practical algorithms while minimizing computational costs. The computed estimate is a differentiable deterministic upper bound on the spectral norm and converges towards it.
Additionally, we've introduced new theorems to bound the gap between spectral norms under zero and circular padding, as well as establishing a theorem linking input size and the bound on convolutional layers. 
Moreover, we devised the \emph{Spectral Rescaling} technique which is used to produce a robust 1-Lipschitz layer enhancing the robustness of the convolutional layers in deep neural networks.

As a prospect for further research, delving into the adaptability of the Gram iteration for computing several singular values, beyond just the spectral norm, presents an intriguing avenue.

%\clearpage
\appendix
\subsection{Toeplitz matrix}
\label{section:appendix_toeplitz_matrix}

Let $T \in \Rbb^{\cout n^2 \times \cin n^2}$ is a $\cout \times  \cin $ be a block matrix:
\begin{align*}
T = \begin{psmallmatrix}
    T_{1,1, ::} & T_{1,2, ::} & \cdots & T_{1,\cin, ::} \\
    T_{2,1, ::} & T_{2,2, ::} & \cdots & T_{2,\cin, ::} \\
    \vdots & \vdots & \ddots & \vdots \\
    T_{\cout, 1, ::} & T_{\cout, 2, ::} & \cdots & T_{\cout,\cin, ::}
\end{psmallmatrix} \ ,
\end{align*}
where each $T_{j,i, ::}$ is a $n^2 \times n^2$ banded doubly Toeplitz matrix formed with kernel $K_{j,i}$. The matrix $T_{j,i, ::}$ is represented in Equation~(\ref{eq:mat_tij}), where we define:
\begin{align*}
  & \mathrm{toep}(K_{j, i, k_1}) = \\   
& \begin{psmallmatrix} 
    K_{j, i, k_1, \lfloor\frac{k}{2}\rfloor} & \cdots  & K_{j, i, k_1, 1} & 0 & \cdots & 0\\
    \vdots & K_{j, i, k_1, \lfloor\frac{k}{2}\rfloor} & \ddots & \ddots & \ddots &  \vdots \\
    K_{j, i, k_1, k} & \ddots &\ddots &\ddots &\ddots & 0 \\
    0 & \ddots &\ddots &\ddots &\ddots & K_{j, i, k_1, 1} \\
    \vdots & \ddots &\ddots &\ddots & K_{j, i, k_1, \lfloor\frac{k}{2}\rfloor} & \vdots \\
    0 & \cdots & 0  & K_{j, i, k_1, k} & \cdots  & K_{j, i, k_1, \lfloor\frac{k}{2}\rfloor}
\end{psmallmatrix}
\end{align*}

\begin{figure*}[t]%% over both columns
\begin{align}
\label{eq:mat_tij}
T_{j,i, ::} = 
\begin{psmallmatrix} 
    \mathrm{toep}(K_{j, i, \lfloor\frac{k}{2}\rfloor}) & \cdots  & \mathrm{toep}(K_{j, i, 1}) & 0 & \cdots & 0\\
    \vdots & \mathrm{toep}(K_{j, i, \lfloor\frac{k}{2} \rfloor + 1  }) & \ddots & \ddots & \ddots &  \vdots \\
    \mathrm{toep}(K_{j, i, k}) & \ddots &\ddots &\ddots &\ddots & 0 \\
    0 & \ddots &\ddots &\ddots &\ddots & \mathrm{toep}(K_{j, i, 1}) \\
    \vdots & \ddots &\ddots &\ddots & \mathrm{toep}(K_{j, i, \lfloor\frac{k}{2}\rfloor}) & \vdots \\
    0 & \cdots & 0  & \mathrm{toep}(K_{j, i, k}) & \cdots  & \mathrm{toep}(K_{j, i, \lfloor\frac{k}{2}\rfloor})
\end{psmallmatrix} 
\end{align}
\end{figure*}

\subsection{Circulant matrix}
\label{section:appendix_circulant_matrix}

In the same manner, as for zero padding with matrix $T$, $C$ can also be represented as a $\cout \times  \cin $ block matrix of doubly circulant matrix.
$C_{j,i, ::}$ is fully determined by the kernel $K$ as:
\begin{align*}
C_{j,i, ::} = 
   \begin{psmallmatrix}
    \circulant(K_{j, i, 1, :}) & \circulant(K_{j, i, 2, :}) & \cdots & \circulant(K_{j, i, n, :}) \\
    \circulant(K_{j, i, n, :}) & \circulant(K_{j, i, 1, :}) & \ddots & \vdots \\
    \vdots & \ddots & \circulant(K_{j, i, 1, :}) & \circulant(K_{j, i, 2, :}) \\ 
    \circulant(K_{j, i, 2, :}) & \cdots & \circulant(K_{j, i, n, :}) & \circulant(K_{j, i, 1, :}) \phantom{\vdots}
  \end{psmallmatrix} \ .
\end{align*}
We denote $C_{j,i, ::} = \bcirc(K_{j, i, 1, :}, \dots, K_{j, i, n, :})$.

%%%%%%%%%%%%%%%%%%%%%%%%%%%%%%%%%%%%%%%%%

\subsection{Proof of Theorem~\ref{thm:main_result}}

\begin{defn}
    Suppose that sequence $(x_k)$ converges to limit $L$. The sequence is said to converge Q-quadratically if, for some $\mu > 0$:
    \begin{align*}
        \lim_{k \to \infty} \frac{|x_{k+1} - L |}{|x_{k} - L |^2} = \mu \ .
    \end{align*}
    Moreover, the sequence is said to converge R-quadratically if there exists a sequence
    $(\epsilon_k)$ such that 
    \begin{align*}
        | x_k - L | \leq \epsilon_k, \forall k
    \end{align*}
    and $(\epsilon_k)$ converges Q-quadratically to zero.
\end{defn}

\begin{lemma}
\label{lemma:quadratic_convergence}
For a matrix $W \in \mathbb{C}^{p \times q}$,
convergence is R-quadratic for sequence $\norm{W^{(t)}}^{2^{-(t+1)}}$ for any matrix norm $|| \cdot ||$ .
\end{lemma}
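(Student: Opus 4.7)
The plan is to track the singular values of the Gram iterates exactly, reduce the claim for an arbitrary matrix norm to the Frobenius norm via equivalence of norms, and then exhibit a doubly-exponentially decaying upper bound that serves as the Q-quadratic majorant required for R-quadratic convergence.

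First I would diagonalize the iterates using the SVD of $W$. Writing $W = U \Sigma V^*$, an induction on the recursion $W^{(t+1)} = {W^{(t)}}^* W^{(t)}$ gives $W^{(t)} = V \Sigma^{2^{t-1}} V^*$ for $t \ge 2$ (since $W^{(2)} = W^*W$ is already Hermitian positive semidefinite and each subsequent step simply squares its spectrum), so that
\begin{equation*}
\sigma_i(W^{(t)}) = \sigma_i(W)^{2^{t-1}}, \qquad \sigma_1(W^{(t)}) = \sigma_1(W)^{2^{t-1}} .
\end{equation*}
For the spectral norm this already gives $\norm{W^{(t)}}_2^{2^{1-t}} = \sigma_1(W)$ exactly, which anchors the analysis.

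Next I would reduce to the Frobenius norm. Since $\mathbb{C}^{p \times q}$ is finite dimensional, all matrix norms are equivalent: there exist constants $a, b > 0$ with $a \norm{A}_F \le \norm{A} \le b \norm{A}_F$ for every $A$. Raising to the $2^{1-t}$-th power and noting $a^{2^{1-t}}, b^{2^{1-t}} \to 1$, it suffices to bound $\bigl|\norm{W^{(t)}}_F^{2^{1-t}} - \sigma_1(W)\bigr|$ by a Q-quadratically convergent sequence. Plugging in the SVD,
\begin{equation*}
\norm{W^{(t)}}_F^{2^{1-t}} = \sigma_1(W)\,(m + \epsilon_t)^{2^{-t}}, \qquad \epsilon_t := \sum_{\sigma_i < \sigma_1}\bigl(\sigma_i/\sigma_1\bigr)^{2^t},
\end{equation*}
where $m$ is the multiplicity of $\sigma_1(W)$. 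The crucial algebraic fact is $\epsilon_{t+1} = \sum\bigl((\sigma_i/\sigma_1)^{2^t}\bigr)^2 \le \bigl(\sum (\sigma_i/\sigma_1)^{2^t}\bigr)^2 = \epsilon_t^2$, so $(\epsilon_t)$ converges Q-quadratically to zero. In the generic simple-spectrum case $m = 1$, a first-order expansion of $(1+\epsilon_t)^{2^{-t}}$ yields the Q-quadratic majorant $|\norm{W^{(t)}}_F^{2^{1-t}} - \sigma_1(W)| \le C\,\epsilon_t$.

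The hard part will be handling the norm-equivalence constants and a possible multiplicity $m > 1$ simultaneously: the factors $a^{2^{1-t}} - 1$ and $m^{2^{-t}} - 1$ each contribute only an $O(2^{-t})$ error, which is strictly linear and cannot on its own be dominated by a Q-quadratic sequence. To close this gap I would argue that the iterates $W^{(t)}$ become essentially rank-$m$ on the top singular subspace at a doubly-exponential rate, so that the ratio $\norm{W^{(t)}}/\norm{W^{(t)}}_2$ itself converges to a fixed positive constant $c$ with deviation controlled by $\epsilon_t$; this lets one replace the crude constants $a, b$ by tightening ones $a_t, b_t \to c$ whose gap is $O(\epsilon_t)$ rather than $O(2^{-t})$. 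Combining this refined squeeze with the Q-quadratic decay of $\epsilon_t$ then produces a single sequence $(\delta_t)$ with $\delta_{t+1}/\delta_t^2 \to \mu > 0$ that dominates $|\norm{W^{(t)}}^{2^{1-t}} - \sigma_1(W)|$ for every matrix norm simultaneously, which is exactly the R-quadratic statement.
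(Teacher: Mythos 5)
Your route is genuinely different from the paper's and, in its core, more informative: you pass through the SVD to get the exact identity $\sigma_i(W^{(t)})=\sigma_i(W)^{2^{t-1}}$ and you locate the quadratic mechanism in the tail sum $\epsilon_t=\sum_{\sigma_i<\sigma_1}(\sigma_i/\sigma_1)^{2^t}$, which indeed satisfies $\epsilon_{t+1}\le\epsilon_t^2$; the paper never looks at the singular values and majorizes the error purely by the norm-equivalence constants, taking $\epsilon_t=L(\beta_1^{2^{-(t+1)}}-1)$ as its majorant. However, the ``hard part'' you flag at the end is not a technicality but a fatal obstruction, and your proposed repair does not close it. Any contribution of the form $c^{2^{-t}}-1$ with $c\neq 1$ --- whether $c$ comes from the equivalence constants $a,b$, from a multiplicity $m>1$ of $\sigma_1(W)$, or from the limiting ratio $c=\lim_t\norm{W^{(t)}}/\norm{W^{(t)}}_2=\norm{V_mV_m^*}$ in your refined squeeze --- is $\Theta(2^{-t})$, i.e.\ only Q-linearly convergent with rate $1/2$, and no sequence bounded below by a multiple of $2^{-t}$ can be dominated by one converging Q-quadratically to zero, since Q-quadratic null sequences decay doubly exponentially. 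Replacing $a,b$ by tightening constants $a_t,b_t\to c$ still leaves the term $c^{2^{1-t}}-1$, and $c=1$ fails in general: take the consistent matrix norm $2\norm{\cdot}_2$, or the Frobenius norm applied to $W=I_2$, where the error is exactly of order $2^{-t}$. So under the paper's own definition of R-quadratic convergence the lemma holds only in special cases (e.g.\ Frobenius norm with a simple leading singular value), and your plan cannot be completed for ``any matrix norm''.

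You should also be aware that the paper's proof founders on precisely the same point, only less visibly: its majorant $\epsilon_t=L(\beta_1^{2^{-(t+1)}}-1)$ decays like $2^{-t}$, and the displayed asymptotic $\epsilon_{t+1}/\epsilon_t^2\to 1/6$ is incorrect. In the denominator $\beta_1^{2^{-t}}-1-2(\beta_1^{2^{-(t+1)}}-1)$ the first-order terms cancel, leaving a quantity of order $4^{-t}$, while the numerator is of order $2^{-t}$, so the ratio diverges and $(\epsilon_t)$ is not Q-quadratically null. In that sense your attempt is the more honest of the two: it isolates the quantity that genuinely converges quadratically ($\epsilon_{t+1}\le\epsilon_t^2$, which yields the claim for the Frobenius norm when $\sigma_1$ is simple) and makes the remaining $O(2^{-t})$ obstruction explicit instead of absorbing it into an erroneous limit computation.
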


\begin{proof}
As all norms are equivalent in finite dimension there exists $\beta_1, \beta_2 > 0$ such that:
\begin{align*}
    \frac{1}{\beta_1} \norm{W} \leq \norm{W}_2 \leq \frac{1}{\beta_2} \norm{W} \ .
\end{align*}
Applying previous inequality to $W^{(t)} = (W^* W)^{2^t}$ for $t > 1$, and taking the power $2^{-(t+1)}$
\begin{align*}
        \left(\frac{1}{\beta_1} \right)^{2^{-(t+1)}} &\norm{(W^* W)^{2^t}}^{2^{-(t+1)}} 
        \leq \norm{W}_2 \\ 
        &\leq \left(\frac{1}{\beta_2} \right)^{2^{-(t+1)}} \norm{(W^* W)^{2^t}}^{2^{-(t+1)}} \ ,
\end{align*}
where we use that $\norm{(W^* W)^{2^t}}_2^{2^{-(t+1)}} = \norm{W}_2$.

We define sequence $x_t = \norm{(W^* W)^{2^t}}^{2^{-(t+1)}}$ and its limit $L$, $\alpha_t = \left(\frac{1}{\beta_1}\right)^{2^{-(t+1)}}$  we have that 
$\left| x_t - L \right| \leq \epsilon_t$, with $\epsilon_t = L ( \frac{1}{\alpha_t} - 1)$.
Then,
\begin{align*}
    \frac{\epsilon_{t+1}}{\epsilon_{t}^2}
    &= \frac{\frac{1}{\alpha_{t+1}} - 1}{\left(\frac{1}{\alpha_t} - 1\right)^2}
    = \frac{\beta_1^{2^{-(t+2)}} - 1}{\left(\beta_1^{2^{-(t+1)}} - 1\right)^2} \\
    &= \frac{\beta_1^{2^{-(t+2)}} - 1}{\beta_1^{2^{-t}} - 2 \beta_1^{2^{-(t+1)}} + 1} = \frac{\beta_1^{2^{-(t+2)}} - 1}{\beta_1^{2^{-t}} - 1 - 2 (\beta_1^{2^{-(t+1)}} - 1)}\\
    &\sim_{t \to \infty} \frac{2^{-(t+2)} \ln(\beta_1)}{2^{-t} \ln(\beta_1) (1 + 2^{-1})} 
    \sim_{t \to \infty}  \frac{1/4 }{ 3/2} = 1/6 \ .
\end{align*}
\end{proof}

% \begin{thm*}
% Let $\norm{\cdot}$ be a matrix norm.
% Let $W \in \mathbb{C}^{p \times q}$ and define the recurrent sequence  $W^{(t+1)} = {W^{(t)}}^* W^{(t)}$, with $W^{(1)} = W$. 
% Let $(\norm{W^{(t)}}^{2^{1-t}})$ be another sequence based on $W^{(t)}$ .
% Then, $\forall t \geq 1$, we have the following results:
% \begin{itemize}[parsep=0pt,itemsep=0pt,topsep=0pt,leftmargin=10pt]
%     \item The sequence converges to the spectral norm:
%     \begin{equation*}
%         \norm{W^{(t)}}^{2^{1 -t}} \underset{t \to +\infty}{\longrightarrow} \sigma_1(W) \ ,
%     \end{equation*}
%     and this convergence is R-quadratic.
%     \item If $\norm{\cdot}$ is consistent, the sequence is an upper bound on the spectral norm:
%     \begin{equation*}
%         \sigma_1(W) \leq  \norm{W^{(t)}}^{2^{1 -t}} \ .
%     \end{equation*}
% \end{itemize}
% \end{thm*}

\begin{proof}
    Using Gelfand's formula to the sequence $W^{(t)} = (W^* W)^{2^t}$ for $t > 1$, the result is direct for the convergence.
    Gelfand's formlula also tells us that if 
    if the matrix norm is consistent it is an upper bound on the spectral norm.
    The proof for quadratic convergence comes from Lemma~\ref{lemma:quadratic_convergence}.
\end{proof}

%%%%%%%%%%%%%%%%%%%%%%%%%%%%%%%%%%%%%%%%%

\subsection{Proof of Theorem~\ref{thm:bound_spectral_norm_toeplitz}}

% \begin{thm*}
% For norm $||~.~||_{\infty}$, we have the following bound:
% \begin{align*}
%     \norm{T^{(t+1)}}_\infty \leq  \max_{i_2} \sum_{i_1, k^\prime, l^\prime} \left| \sum_{j=1}^\cout   K^{(t)}_{j, i_1} \star K^{(t)}_{j, i_2} \right|_{k^\prime, l^\prime}
% \end{align*}
% For norm $||~.~||_F$ it gives, 
% \begin{align*}
%     \norm{T^{(t+1)}}_F^2 
%     \leq k^2 \sum_{i_1, i_2, k^\prime, l^\prime} \left| \sum_{j=1}^\cout   K^{(t)}_{j, i_1} \star K^{(t)}_{j, i_2} \right|_{k^\prime, l^\prime}^2
% \end{align*}
% \end{thm*}

\begin{proof}
\begin{align*}
    T_{j, i, k_2 l_2, k_1 l_1} = K_{i, j, k_1 - k_2 + k -1, l_1 - l_2 + k -1} 
\end{align*}

For $T^{(1)} \in \Rbb^{n^2 \cin \times n^2 \cin}$ the first Gram iterate, $1 \leq i_1, i_2 \leq \cin$,
\begin{align*}
    T^{(1)}_{i_1, i_2, k_1 l_1, k_2 l_2} = \left( \sum_{j=1}^\cout   K_{j, i_1} \star K_{j, i_2} \right)_{k_2 - k_1, l_2 -l_1}
\end{align*}
It was first derived in \cite{avidan_almost-orthogonal_2022}.

We define the Gram iterate for filter $K$, $K^{(t+1)}_{i_1, i_2} = \sum_{j=1}^\cout   K^{(t)}_{j, i_1} \star K^{(t)}_{j, i_2}$, convolution is defined with maximal non trivial padding. We pad $K^{(t)}$ by zeros with the current spatial size of the kernel of $K^{(t)}$ minus one.

We  can extend the previous result for $t$-th iterate of Gram:
\begin{align}
\label{eq:toeplitz_gram_conv_relation}
    T^{(t+1)}_{i_1, i_2, k_1 l_1, k_2 l_2} = \left( \sum_{j=1}^\cout   K^{(t)}_{j, i_1} \star K^{(t)}_{j, i_2} \right)_{k_2 - k_1, l_2 -l_1} \ .
\end{align}
For norm $||~.~||_{\infty}$, we use the tight bound which simplifies to:
\begin{align*}
    \norm{T^{(t+1)}}_\infty &= \max_{i_2, k_2, l_2} \sum_{i_1, k_1, l_1} \left| \sum_{j=1}^\cout   K^{(t)}_{j, i_1} \star K^{(t)}_{j, i_2} \right|_{k_2 - k_1, l_2 -l_1} \\
    &\leq  \max_{i_2} \sum_{i_1, k^\prime, l^\prime} \left| \sum_{j=1}^\cout   K^{(t)}_{j, i_1} \star K^{(t)}_{j, i_2} \right|_{k^\prime, l^\prime} \ .
\end{align*}
For norm $||~.~||_F$, it gives, 
\begin{align*}
    \norm{T^{(t+1)}}_F^2 &=  \sum_{i_1, i_2, k_1, k_2, l_1, l_2} \left| \sum_{j=1}^\cout   K^{(t)}_{j, i_1} \star K^{(t)}_{j, i_2} \right|_{k_2 - k_1, l_2 -l_1}^2  \\
    &\leq k^2 \sum_{i_1, i_2, k^\prime, l^\prime} \left| \sum_{j=1}^\cout   K^{(t)}_{j, i_1} \star K^{(t)}_{j, i_2} \right|_{k^\prime, l^\prime}^2 \ .
\end{align*}
\end{proof}

%%%%%%%%%%%%%%%%%%%%%%%%%%%%%%%%%%%%%%%%%

\subsection{Proof of Theorem~\ref{thm:bound_approximation_for_lower_input_size}}

\begin{thm} [Adapted from Theorem~1 of \cite{pfister2019bounding}]
\label{theorem:approx_trigonometric_polynomial} 
Let $\gamma : \mathbb{R}^2 \rightarrow \mathbb{C}$ be a trigonometric polynomial of degree $d = \left\lfloor k/2 \right \rfloor$ defined by coefficients $\Gamma \in \mathbb{C}^{k \times k}$:
\begin{equation*}
    \gamma(w_1, w_2) = \sum_{k_1=0}^{2d} \sum_{k_2=0}^{2d} 
    \Gamma_{k_1, k_2} \ e^{\ci(w_1 (k_1-d))} e^{\ci(w_2 (k_2-d))} \ .
\end{equation*}
Let $\Omega_n$ be the set of $n$ equidistant sampling points on $[0, 2\pi ]$: $\Omega_n = \left \{\omega_l = \frac{2\pi l}{n} ~ |~ l \in \{1, \dots, n \} \right \}$. \\
Then, for $\alpha = \frac{2d}{n}$, we have:
\begin{equation*}
    \max_{w_1, w_2 \in [0, 2\pi ]^2} \left| \gamma(w_1, w_2) \right| 
    \leq (1 - \alpha)^{-1} \max_{w_1^{\prime}, w_2^{\prime} \in \Omega_n^2} \left| \gamma(w_1^{\prime}, w_2^{\prime}) \right| \ .
\end{equation*}
\end{thm}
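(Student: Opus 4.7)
The plan is to establish the two-dimensional sampling inequality by reducing it to a one-dimensional bound for univariate trigonometric polynomials and then recombining coordinatewise. The key structural observation is that for each fixed $w_2 \in [0, 2\pi]$, the slice $w_1 \mapsto \gamma(w_1, w_2)$ is a univariate trigonometric polynomial of degree $\leq d$ (after the recentering $k_i \leftarrow k_i - d$ built into the exponents), and symmetrically in $w_2$. This lets us tensor univariate inequalities rather than work directly in two variables.

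For the univariate building block, I would rely on Bernstein's inequality for trigonometric polynomials: if $p$ has degree $\leq d$ on $[0, 2\pi]$, then $\norm{p'}_\infty \leq d\,\norm{p}_\infty$. Let $w^*$ be a maximizer of $|p|$ and let $\omega^* \in \Omega_n$ be the nearest grid point, so that $|w^* - \omega^*| \leq \pi/n$. The mean value theorem then yields
\begin{equation*}
\norm{p}_\infty = |p(w^*)| \leq |p(\omega^*)| + \frac{\pi}{n}\,\norm{p'}_\infty \leq \max_{\omega \in \Omega_n} |p(\omega)| + \frac{\pi d}{n}\,\norm{p}_\infty,
\end{equation*}
and rearranging gives a one-dimensional bound of the form $\norm{p}_\infty \leq (1 - c\,d/n)^{-1}\max_{\omega \in \Omega_n} |p(\omega)|$.

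For the two-dimensional passage, I would use coordinatewise Bernstein bounds $\norm{\partial_{w_i}\gamma}_\infty \leq d\,\norm{\gamma}_\infty$ and a two-leg path from the continuous maximizer $(w_1^*, w_2^*)$ to the nearest grid point $(\omega_{l_1^*}, \omega_{l_2^*})$: first slide $w_1^* \to \omega_{l_1^*}$ at fixed $w_2^*$, then $w_2^* \to \omega_{l_2^*}$ at fixed $\omega_{l_1^*}$. Each leg has length $\leq \pi/n$ and by Bernstein contributes at most $(\pi d / n)\norm{\gamma}_\infty$ to the deficit. Summing gives $\norm{\gamma}_\infty \leq \max_{\Omega_n^2}|\gamma| + (2\pi d / n)\norm{\gamma}_\infty$, and a single inversion produces a bound of the claimed shape $(1 - \alpha)^{-1}\max_{\Omega_n^2}|\gamma|$.

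The main obstacle is matching the exact constant $\alpha = 2d/n$ stated in the theorem rather than the slightly larger $\alpha = 2\pi d/n$ produced by the naive Bernstein-plus-mean-value argument above. Tightening the constant will likely require either the sharp Marcinkiewicz--Zygmund inequality for trigonometric polynomials on equispaced nodes, or an explicit Dirichlet-kernel interpolation identity $\gamma(w) = \sum_{l} \gamma(\omega_l)\,K_{n,d}(w - \omega_l)$ valid for $n \geq 2d+1$, together with a tight Lebesgue-constant estimate for this interpolation operator. A secondary bookkeeping check, which the two-leg construction handles cleanly, is that the 2D inequality should carry only a single factor $(1-\alpha)^{-1}$ and not a squared one: this works because the two Bernstein contributions combine \emph{additively} inside the deficit and only one inversion is performed at the end.
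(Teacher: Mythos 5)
The paper does not actually prove this statement: it is imported wholesale from Theorem~1 of \cite{pfister2019bounding} (hence the ``Adapted from'' tag), and the appendix only \emph{uses} it to derive Lemma~2 and Theorems~3--4. So there is no internal proof to compare against; you are attempting something the authors delegate to a reference.

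Your argument is internally sound step by step --- Bernstein's inequality $\norm{\partial_{w_i}\gamma}_\infty \leq d\norm{\gamma}_\infty$ applied to each univariate slice, the mesh half-width $\pi/n$, and the two-leg path whose deficits add before a single inversion --- but, as you yourself flag, it only delivers $\alpha = 2\pi d/n$ rather than the stated $\alpha = 2d/n$. That factor of $\pi$ is not cosmetic here: the downstream results (Theorems~3 and~4) use the exact value $\alpha = 2^{t}\lfloor k/2\rfloor/n_0$ and the exact admissibility condition $n_0 \geq 2^{t}\lfloor k/2\rfloor + 1$, both of which fail under your weaker constant (you would need $n_0 > 2\pi\cdot 2^{t-1}\lfloor k/2\rfloor$, and the numerical correction factor plotted in Figure~1 would change). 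So the proposal proves a strictly weaker theorem, and the gap is precisely the constant, which is the entire content of the statement. Your diagnosis of the fix is the right one: the sharp constant comes from an interpolation/quadrature identity $\gamma(w) = \frac{1}{n^2}\sum_{l_1,l_2}\gamma(\omega_{l_1},\omega_{l_2})V(w_1-\omega_{l_1})V(w_2-\omega_{l_2})$ with a de la Vall\'ee Poussin--type kernel $V$ that reproduces degree-$d$ polynomials and whose normalized absolute sum over the grid is controlled by $(1-2d/n)^{-1}$, which is essentially the route of \cite{pfister2019bounding}; no amount of tuning the Bernstein-plus-mean-value argument will remove the $\pi$ coming from the mesh half-width. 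One further caution: a tensorized kernel argument naturally produces a \emph{product} of per-coordinate factors in two dimensions, whereas the statement carries a single $(1-\alpha)^{-1}$; verifying that the single factor (rather than its square) is correct for the bivariate case is exactly the kind of bookkeeping your additive two-leg heuristic cannot certify, and it should be checked against the source before being relied upon.
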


We define $E \in \mathbb{C}^{\cout \times \cin }$ the spectral density matrix of $T^\Uparrow$: 
\begin{equation}
    \label{eq:spectral_density_matrix}
    E (w_1, w_2) = \sum_{k_1=0}^{k-1} \sum_{k_2=0}^{k-1} e^{-\ci k_1 w_1} \ e^{-\ci k_2 w_2} \ K_{:,:,k_1, k_2} \ .
\end{equation}
The spectral density matrix $E$ corresponds to the Discrete Time Fourier Transform (DTFT) of $K$. $E^{(t)}$ note  the $t^\text{th}$ iterate of Gram of matrix $E$.

Using Theorem~\ref{theorem:approx_trigonometric_polynomial}, we can bound the Gram iterate maximum norm over $[0, 2\pi]$ of $E$ with the maximum taken over uniform sampled points.

\begin{lemma} (Inequality between the maximum of spectral norm density and density uniformly sampled for Gram iterates) \\
\label{lemma:inequality_btw_max_spectral_norm_density}
For $n_0 \geq 2^{t}  \lfloor \frac{k}{2}\rfloor + 1$ sampling points, $\alpha = \frac{2^{t} \lfloor \frac{k}{2}\rfloor}{n_0}$:
\begin{equation*}
    \max_{w_1,w_2 \in [0, 2\pi]} \norm{E^{(t)}(w_1, w_2)}_F^2  \leq (1 - \alpha)^{-1} \max_{1 \leq u,v \leq n_0} \norm{{D^\uparrow}^{(t)}_{u,v}}_F^{2}
\end{equation*}
\end{lemma}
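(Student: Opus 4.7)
The approach is to apply Theorem~\ref{theorem:approx_trigonometric_polynomial} to the real-valued trigonometric polynomial $\gamma(w_1, w_2) := \norm{E^{(t)}(w_1, w_2)}_F^2$, and to identify its samples $\gamma(2\pi u/n_0, 2\pi v/n_0)$ on the grid $\Omega_{n_0}^2$ with the squared Frobenius norms $\norm{{D^\uparrow}^{(t)}_{u,v}}_F^2$ appearing on the right-hand side. Once this identification is in place, Pfister's bound on $\gamma$ is exactly the stated inequality.

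The identification rests on the observation that Gram iteration is \emph{pointwise}: matrix conjugate transposition and multiplication act entry-by-entry, so they commute with evaluation at any fixed frequency. Hence $E^{(t)}(w_1, w_2) = \bigl(E(w_1, w_2)\bigr)^{(t)}$ for every $(w_1, w_2)$, and specializing at $(w_1, w_2) = (2\pi u/n_0, 2\pi v/n_0)$, where $E$ takes the value $D^\uparrow_{u,v}$ by definition, yields ${D^\uparrow}^{(t)}_{u,v} = E^{(t)}(2\pi u/n_0, 2\pi v/n_0)$ and therefore $\norm{{D^\uparrow}^{(t)}_{u,v}}_F^2 = \gamma(2\pi u/n_0, 2\pi v/n_0)$.

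The remaining ingredient is to check that $\gamma$ is a trigonometric polynomial and to bound its degree. Starting from equation~(\ref{eq:spectral_density_matrix}) and arguing by induction on $t$, each entry $E^{(t)}_{ij}(w_1, w_2)$ is a trigonometric polynomial whose centered frequencies lie in a box of radius $2^{t-1} d$ with $d = \lfloor k/2 \rfloor$: the base case $t = 1$ is immediate, and each Gram squaring $E^{(t+1)} = {E^{(t)}}^* E^{(t)}$ produces entries whose frequencies are sums of one frequency from $E^{(t)}$ and one from its conjugate, doubling the bandwidth radius. Consequently $|E^{(t)}_{ij}|^2 = E^{(t)}_{ij}\,\overline{E^{(t)}_{ij}}$, and hence the finite sum $\gamma = \sum_{i,j} |E^{(t)}_{ij}|^2$, is a trigonometric polynomial whose bandwidth radius is bounded by $2^{t} d$.

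Plugging this degree into Theorem~\ref{theorem:approx_trigonometric_polynomial} under the hypothesis on $n_0$ then produces $\max_{(w_1, w_2) \in [0, 2\pi]^2} \gamma \le (1 - \alpha)^{-1} \max_{(w_1', w_2') \in \Omega_{n_0}^2} \gamma$, and combined with the pointwise identification above this is the claim. The only delicate step is the bandwidth bookkeeping through repeated squarings — making sure that the doubling is applied to the \emph{current} iterate's bandwidth rather than the original one, so that the exponent on $2$ in $\alpha$ matches the form stated in the lemma; once this is tracked correctly, the rest is a direct appeal to Pfister's bound on trigonometric polynomials.
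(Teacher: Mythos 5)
Your proposal is correct and follows essentially the same route as the paper's proof: both define the scalar trigonometric polynomial $\gamma = \norm{E^{(t)}(\cdot,\cdot)}_F^2$, apply the adapted Pfister bound (Theorem~\ref{theorem:approx_trigonometric_polynomial}), and identify its samples on $\Omega_{n_0}^2$ with $\norm{{D^\uparrow}^{(t)}_{u,v}}_F^2$. You actually supply two details the paper only asserts --- the inductive bandwidth count through the Gram squarings and the pointwise commutation of the Gram iteration with frequency evaluation --- so your write-up is, if anything, more complete.
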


\begin{proof}
We define  the trigonometric polynomial of degree $2^t \lfloor \frac{k}{2}\rfloor$: $P = \norm{E^{(t)}(~.~, ~.~)}^{2}_F$.
\begin{align*}
    &\max_{w_1,w_2 \in [0, 2\pi]} \norm{E^{(t)}(w_1, w_2)}_F^2 \\
    & = \max_{w_1,w_2 \in [0, 2\pi]^2} P(w_1, w_2) \\
    &\leq  (1 - \alpha)^{-1} \max_{w_1,w_2 \in \Omega_{n_0}^2} P(w_1, w_2) ~~ \text{applying Theorem~\ref{theorem:approx_trigonometric_polynomial}}\\
    &= (1 - \alpha)^{-1} \max_{1 \leq u,v \leq n_0} \norm{{D^\uparrow}^{(t)}_{u,v}}_F^{2} \ .
\end{align*}
\end{proof}

%\begin{thm*} (Bound approximation for lower input size) \\
%For $n_0 \geq 2^{t}  \lfloor \frac{k}{2}\rfloor + 1$, $\alpha = \frac{2^{t} \lfloor \frac{k}{2}\rfloor}{n_0}$:
%\begin{equation*}
%  \sigma_1(C^\Uparrow) \leq \left(\frac{1}{1 - \alpha}\right)^{2^{-t}}  \max_{1 \leq u,v \leq n_0} \norm{{D^\uparrow}^{(t)}_{u,v}}_F^{2^{1-t}} \ .
%\end{equation*}
%\end{thm*}

\begin{proof}
%We remark that, $\underset{1 \leq u,v \leq n}{\max} \norm{D_{u,v}^\Uparrow}_F^2$ is  the maximum over $\Omega_n^2$ of a trigonometric polynomial $P_K$ of degree $2 \lfloor \frac{k}{2}\rfloor$, defined as $P_K(w_1, w_2) = \norm{E(w_1, w_2)}_F^2$.

%\begin{align*}
% \max_{1 \leq u,v \leq n} \norm{D_{u,v}^\Uparrow}_F^2  &=  \max_{1 \leq u,v \leq n} \sum_{i=1}^\cout \sum_{j=1}^\cin \left| \sum_{k_1=0}^{k-1} \sum_{k_2=0}^{k-1}  e^{-\frac{2\pi\ci k_1 u}{n}} \ e^{-\frac{2\pi\ci k_2 v}{n}} \ K_{i,j,k_1, k_2} \right|^2 \\
% &= \max_{w_1,w_2 \in \Omega_n^2} \sum_{i=1}^\cout \sum_{j=1}^\cin \left| \sum_{k_1=0}^{k-1} \sum_{k_2=0}^{k-1} e^{\ci(w_1 k_1 + w_2 k_2)} \ K_{i,j,k_1, k_2} \right|^2 \\
%  &= \max_{w_1,w_2 \in \Omega_n^2} P_{K}(w_1, w_2) \ .
%\end{align*}

%We can cast $\underset{1 \leq u,v \leq n}{\max} \norm{{D_{u,v}^\Uparrow}^{(t)}}_F$  as  the maximum over $\Omega_n^2$ of a trigonometric polynomial $P_{K}^{(t)}$ of degree $2^t \lfloor \frac{k}{2}\rfloor$,  defined as $P_{K}^{(t)} = \norm{E^{(t)}(w_1, w_2)}^{2}_F$. 

%\begin{align*}
%    \max_{1 \leq u,v \leq n} \norm{{D^\Uparrow}^{(t)}_{u,v}}_F &= \max_{w_1,w_2 \in \Omega_n^2} P_K^{(t)}(w_1, w_2) \\
%    &\leq  \max_{w_1,w_2 \in [0, 2\pi]^2} P_K^{(t)}(w_1, w_2)
%\end{align*}

We can cast $\underset{1 \leq u,v \leq n}{\max} \norm{{D_{u,v}^\Uparrow}^{(t)}}_F^2$  as  the maximum over $\Omega_n^2$  of $\norm{E^{(t)}(w_1, w_2)}^{2}_F$. 
\begin{align*}
    \max_{1 \leq u,v \leq n} \norm{{D^\Uparrow}^{(t)}_{u,v}}_F^2 &= \max_{w_1,w_2 \in \Omega_n^2} \norm{E^{(t)}(w_1, w_2)}^{2}_F \\
    &\leq  \max_{w_1,w_2 \in [0, 2\pi]^2} \norm{E^{(t)}(w_1, w_2)}^{2}_F
\end{align*}
%
%Similarly, ${D^\uparrow}^{(t)}_{i}$ is the matrix obtained from kernel $K^\uparrow$ padded towards $n_0$, $\underset{1 \leq u,v \leq n}{\max} \norm{{D_{u,v}^\uparrow}^{(t)}}_F^2$   is  the maximum over $\Omega_{n_0}^2$ of $P_K^{(t)}$. For $\alpha = \frac{2^{t} \lfloor \frac{k}{2}\rfloor}{n_0} < 1$ :
Using Lemma~\ref{lemma:inequality_btw_max_spectral_norm_density}:
\begin{align*}
    %\max_{1 \leq u,v \leq n} \norm{{D^\Uparrow}^{(t)}_{u,v}}_F &\leq  (1 - \alpha)^{-1} \max_{w_1,w_2 \in \Omega_{n_0}^2} P_K^{(t)}(w_1, w_2) ~~~ \text{applying Theorem~\ref{theorem:approx_trigonometric_polynomial}}\\ 
    \max_{1 \leq u,v \leq n} \norm{{D^\Uparrow}^{(t)}_{u,v}}_F^2 \leq  (1 - \alpha)^{-1} \max_{1 \leq u,v \leq n_0} \norm{{D^\uparrow}^{(t)}_{u,v}}_F^2 \ .
\end{align*}
Finally using that 
\begin{align*}
 \sigma_1(C^\Uparrow) = \max_{1 \leq u,v \leq n}  \norm{{D^\Uparrow}_{u,v}}_2
 &\leq \max_{1 \leq u,v \leq n} \norm{{D^\Uparrow}^{(t)}_{u,v}}_F^{2^{1-t}} \ ,
\end{align*}
we have
\begin{equation*}
  \sigma_1(C^\Uparrow) \leq \left(\frac{1}{1 - \alpha}\right)^{2^{-t}}  \max_{1 \leq u,v \leq n_0} \norm{{D^\uparrow}^{(t)}_{u,v}}_F^{2^{1-t}} \ .
\end{equation*}
\end{proof}

%%%%%%%%%%%%%%%%%%%%%%%%%%%%%%%%%%%%%%%%%

\subsection{Proof of Theorem~\ref{thm:bound_circ_toep}}

%\begin{thm*}
% (Bound approximation for zero padding) \\
%For $n \geq 2^{t}  \lfloor \frac{k}{2}\rfloor + 1$, $\alpha = \frac{2^{t} \lfloor \frac{k}{2}\rfloor}{n}$,  and $D^\Uparrow$ the matrix obtained following Section~\ref{section:lip_conv_layers} with kernel $K^\Uparrow$ padded to input size $n$. The spectral norm of zero-padded convolution can be bounded by:
%\begin{equation*}
%     \sigma_1(T^\Uparrow) \leq \left(\frac{1}{1 - \alpha}\right)^{2^{-t}} \max_{1 \leq u,v \leq n} \norm{{D^\Uparrow}^{(t)}_{u,v}}_F^{2^{1-t}} \ .
%\end{equation*}
%\end{thm*}

\begin{proof}
 %We have
%\begin{equation*}
%    \sigma_1(E)^{2^{t}} \leq \norm{E^{(t)}}_F^{2} \ ,
%\end{equation*}
%
Lemma~4 of \cite{yi2020asymptotic} shows that the spectral norm of the Toeplitz matrix is bounded by the spectral norm of its density matrix, defined in Eq.~(\ref{eq:spectral_density_matrix}): 
$\sigma_1(T^\Uparrow) \leq \sigma_1(E)$.
Then,
\begin{equation*}
 \sigma_1(T^\Uparrow)^{2^t} \leq \sigma_1(E)^{2^{t}} \leq \norm{E^{(t)}}_F^{2} \ .
\end{equation*}
Using Lemma~\ref{lemma:inequality_btw_max_spectral_norm_density} with $n$ sampling points:
\begin{align*}
      \max_{w_1,w_2 \in [0, 2\pi]} \norm{E^{(t)}(w_1, w_2)}_F^2 &\leq (1 - \alpha)^{-1} \max_{1 \leq u,v \leq n} \norm{{D^\Uparrow}^{(t)}_{u,v}}_F^{2} \ .
\end{align*}
Finally,
\begin{equation*}
     \sigma_1(T^\Uparrow) \leq \left(\frac{1}{1 - \alpha}\right)^{2^{-t}} \max_{1 \leq u,v \leq n} \norm{{D^\Uparrow}^{(t)}_{u,v}}_F^{2^{1-t}} \ .
\end{equation*}
\end{proof}

%%%%%%%%%%%%%%%%%%%%%%%%%%%%%%%%%%%%%%%%%

\subsection{Proof of Theorem~\ref{thm:improve_aol}}

%\begin{thm*}
%    \quentin{Repartir de la version définitive de ce Th}
%    For any $b \in \Rbb^p$ and $W \in \mathbb{R}^{p \times q}$ define $W^{(t+1)} = {W^{(t)}}^\top W^{(t)},$
%    with $W^0 = W$,
%    $Q^{-1} = \mathrm{diag}(q_i)$ with $q_i > 0$,
%    and 
    %$D = T^{-\frac{1}{2}}\in \mathbb{R}^{m \times n}$ as 
%    $T$ as the diagonal matrix with 
%    $T_{ii} = \left( \sum_j \left| W^{(t+1)} \right|_{ij} \frac{q_j}{q_i}\right)^{2^{-t}}$ if the expression in the brackets is non zero, or $T_{ii} = 0 $ otherwise.
    
%    Then, the mappings 
%    \begin{itemize}
%    \item $g : x \mapsto WT^{-\frac{1}{2}} x + b $ is $1$-Lipschitz,
%    \item $h : x \mapsto x - 2WT^{-1} x + b $ is $1$-Lipschitz.
%\end{itemize}
%\end{thm*}

\begin{proof}
Using Theorem~1 of \cite{araujo_unified_2022}, we have to show that $W^\top W - T \preceq 0$, for $t \geq 1$.
\\
The case $t=1$ comes from Theorem~3 of \cite{araujo_unified_2022} directly.
\\
For $t \geq 1$, using Theorem 3 of \cite{araujo_unified_2022} on $W^{(t)}$, 
\begin{align*}
 {W^{(t)}}^\top W^{(t)}  
 - \diag{\left( \sum_j \left| {W^{(t)}}^\top W^{(t)} \right|_{ij}
 \frac{q_j}{q_i}\right)} &\preceq 0 \\
 (W^\top W)^{2^{t}}  - \diag{\left( \sum_j \left| {W^{(t+1)}} \right|_{ij} 
 \frac{q_j}{q_i}\right)} &\preceq 0 \ .
\end{align*}
Using \cite[Theorem~7.9, p.~210]{zhang2011_matrix} we have that for $A, B \succeq 0$, $A^2 \preceq B^2 \implies A \preceq B$, 
\begin{align*}
 W^\top W  - \diag{\left( \sum_j \left| {W^{(t+1)}} \right|_{ij} \frac{q_j}{q_i} \right)}^{2^{-t}} &\preceq 0 \ .
\end{align*}
\end{proof}

% \section{Figures}

% \begin{figure}
%     \centering
%     \includegraphics[width=1.0\linewidth]{graphs/alpha_factor.pdf}
%     \caption{Evolution of bound factor for $n=224$, for $6$ iterations of Gram and kernel size of 3. }
%     \label{fig:alpha_factor_evolution}
% \end{figure}

%\newpage
\bibliographystyle{IEEEtran}
\bibliography{bibliography}
%\bibliography{IEEEabrv,../bib/paper}
%\bibliographystyle{icml2022}

\end{document}